\definecolor{lightgray}{gray}{0.5}
\pgfplotsset{width=5.75cm,compat=1.9}
\DeclareSymbolFont{extraup}{U}{zavm}{m}{n}
\DeclareMathSymbol{\vardiamond}{\mathalpha}{extraup}{87}
\newtheorem{theorem}{Theorem}
\newtheorem{lemma}{Lemma}
\newtheorem{corollary}{Corollary}
\newtheorem{remark}{Remark}
\long\def\comment#1{}
\newcommand{\ind}[1]{\ensuremath{{\mathbf{1}\left\{ #1 \right\}}}}
\DeclareMathOperator*{\argmax}{argmax}
\newcommand{\real}{\ensuremath{\mathbb{R}}}
\title{\textbf{Problem-Complexity Adaptive Model Selection for Stochastic Linear Bandits}}
\author{Avishek Ghosh, Abishek Sankararaman and Kannan Ramchandran \vspace{2mm} \\
Dept. of Electrical Engg. and Computer Sciences, UC Berkeley \vspace{2mm} \\
email:\{avishek$\_$ghosh,abishek\}@berkeley.edu, kannanr@eecs.berkeley.edu
}
\begin{document}
\maketitle

\begin{abstract}
\noindent 
We consider the problem of \emph{model selection} for two popular stochastic linear bandit settings, and propose algorithms that adapts to the \emph{unknown} problem complexity. In the first setting, we consider the $K$ armed mixture bandits, where the mean reward of arm $i \in [K]$\footnote{By $[r]$, we denote the set of positive integers $\{1,2,\ldots,r\}$.}, is $\mu_i+ \langle \alpha_{i,t},\theta^* \rangle $, with $\alpha_{i,t} \in \mathbb{R}^d$ being the known context vector and $\mu_i \in [-1,1]$ and $\theta^*$ are unknown parameters. 
We define\footnote{Thoroughout the paper we use $\|.\|$ to denote the $\ell_2$ norm unless otherwise specified.} $\|\theta^*\|$ as the problem complexity and consider a sequence of nested hypothesis classes, each positing a different upper bound on $\|\theta^*\|$. Exploiting this, we propose Adaptive Linear Bandit ({\ttfamily ALB}), a novel phase based algorithm that adapts to the true problem complexity, $\|\theta^*\|$.
We show that {\ttfamily ALB} achieves regret scaling of\footnote{The notation $\widetilde{\mathcal{O}}$ hides the logarithmic dependence.} $\widetilde{O}(\|\theta^*\|\sqrt{T})$, where $\|\theta^*\|$ is apriori unknown. 
As a corollary, when $\theta^*=0$, {\ttfamily ALB} recovers the minimax regret for the simple bandit algorithm without such knowledge of $\theta^*$. {\ttfamily ALB} is the first algorithm that uses parameter norm as model section criteria for linear bandits. Prior state of art algorithms \cite{osom} achieve a regret of $\widetilde{O}(L\sqrt{T})$, where $L$ is the upper bound on $\|\theta^*\|$, fed as an input to the problem. 
In the second setting, we consider the standard linear bandit problem (with possibly an infinite number of arms) where the sparsity of $\theta^*$, denoted by  $d^* \leq d$, is unknown to the algorithm. Defining $d^*$ as the problem complexity (similar to \cite{foster_model_selection}),
we show that  {\ttfamily ALB} achieves $\widetilde{O}(d^*\sqrt{T})$ regret, matching that of an oracle who knew the true sparsity level. 
This is the first algorithm that achieves such model selection guarantees.
This is methodology is then extended to the case of finitely many arms and similar results are proven.
We further verify through synthetic and real-data experiments that the performance gains are fundamental and not artifacts of mathematical bounds. In particular, we show $1.5-3$x drop in cumulative regret over non-adaptive algorithms.
\end{abstract}

\section{Introduction}
\label{sec:intro}


We study {model selection} for MAB, which refers to choosing the appropriate hypothesis class, to model the mapping from arms to expected rewards.
Model selection for MAB plays an important role in applications such as personalized recommendations, as we explain in the sequel.
Formally,  a family of nested hypothesis classes $\mathcal{H}_f$, $f \in \mathcal{F}$ needs to be specified, where each class posits a plausible model for mapping arms to expected rewards.
The true model is assumed to be contained in the family $\mathcal{F}$ which is totally ordered, where if $f_1 \leq f_2$, then $\mathcal{H}_{f_1}\subseteq \mathcal{H}_{f_2}$. 
Model selection guarantees then refers to algorithms whose regret scales in the complexity of the \emph{smallest hypothesis class containing the true model}, even though the algorithm was not aware apriori.


We consider two canonical settings for the stochastic MAB problem.
The first is the \emph{$K$ armed mixture MAB} setting, in which the mean reward from any arm $i \in [K]$ is given by $\mu_i + \langle \theta^*,\alpha_{i,t} \rangle $, where $\alpha_{i,t} \in \mathbb{R}^d$ is the known context vector of arm $i$ at time $t$, and $\mu_i \in \mathbb{R}$, $\theta^* \in \mathbb{R}^d$ are unknown and needs to be estimated.
This setting also contains the standard MAB \cite{lai-robbins,auer2002finite} when $\theta^* = 0$.
Popular linear bandit algorithms, like LinUCB, OFUL (see \cite{chu2011contextual,dani2008stochastic,oful}) handle the case with no bias ($\mu_i=0$), while OSOM \cite{osom}, the recent improvement can handle arm-bias. Implicitly, all the above algorithms assume an upper bound on the norm of $\|\theta^*\| \leq L$, which is supplied as an input.
Crucially however, the regret guarantees scale linearly in the upper bound $L$.
In contrast, we choose $\|\theta^*\|$ as the problem complexity, and provide a novel phase based algorithm, that, without any upper bound on the norm $\|\theta^*\|$,  \emph{adapts to the true complexity} of the problem instance, and achieves a regret scaling linearly in the true norm $\|\theta^*\|$.
As a corollary, our algorithm's performance matches the minimax regret of simple MAB when $\theta^* = 0$, even though the algorithm did not apriori know that $\theta^*=0$.
Formally, we consider a continuum of hypothesis classes, with each class positing a different upper bound on the norm $\|\theta^*\|$, where the complexity of a class is the upper bound posited.
As our regret bound scales linearly in $\|\theta^*\|$ (the complexity of the smallest hypothesis class containing the instance) as opposed to an upper bound on $\|\theta^*\|$, our algorithm achieves model selection guarantees.

The second setting we consider is the standard linear stochastic bandit \cite{oful} with possibly an infinite number of arms, where the mean reward of any arm $x \in \mathbb{R}^d$ (arms are vectors in this case) given by $\langle x,\theta^* \rangle$, where $\theta^* \in \mathbb{R}^d$ is unknown.
For this setting, we consider model selection from among a total of $d$ different hypothesis classes,  with each class positing a different cardinality for the support of $\theta^*$.
We exhibit a novel algorithm, where the regret scales linearly in the unknown cardinality of the support of $\theta^*$. 
The regret scaling of our algorithm matches that of an oracle that has knowledge of the optimal support cardinality \cite{sparse_bandit1},\cite{sparse_bandit2}, thereby achieving model selection guarantees.
Our algorithm is the first known algorithm to obtain regret scaling matching that of an oracle that has knowledge of the true support.
This is in contrast to standard linear bandit algorithms such as \cite{oful}, where the regret scales linearly in $d$. 
We also extend this methodology to the case when the number of arms is finite and obtain similar regret rates matching the oracle.
Model selection with dimension as a measure of complexity was also recently studied by \cite{foster_model_selection}, in which the classical contextual bandit \cite{chu2011contextual} with a finite number of arms was considered. We clarify here that although our results for the finite arm setting yields a better (optimal) regret scaling with respect to the time horizon $T$ and the support of $\theta^*$ (denoted by $d^*$), our guarantee depends on a problem dependent parameter and thus not uniform over all instances. In contrast, the results of \cite{foster_model_selection}, although sub-optimal in $d^*$ and $T$, is uniform over all problem instances. Closing this gap is an interesting future direction.



\subsection{Our Contributions}

\textbf{1. Successive Refinement Algorithms for Stochastic Linear Bandit} - We present two novel epoch based algorithms, {\ttfamily ALB (Adaptive Linear Bandit) - Norm} and {\ttfamily ALB - Dim}, that achieve model selection guarantees for both families of hypothesis respectively.
For the $K$ armed mixture MAB setting, {\ttfamily ALB-Norm}, at the beginning of each phase, estimates an \emph{upper bound} on the norm of $\|\theta^*\|$. 
Subsequently, the algorithm assumes this bound to be true during the phase, and the upper bound is re-estimated at the end of a phase.
Similarly for the linear bandit setting, {\ttfamily ALB-Dim} estimates the support of $\theta^*$ at the beginning of each phase and subsequently only plays from this estimated support during the phase. 
In both settings, we show the estimates converge to the true underlying value \textemdash in the first case, the estimate of norm $||\theta^*||$ converges to the true norm, and in the second case, for all time after a random time with finite expectation, the estimated support equals the true support.
Our algorithms are reminiscent of successive rejects algorithm \cite{successive-rejects} for standard MAB, with the crucial difference being that our algorithm is \emph{non-monotone}.
Once rejected, an arm is never pulled in the classical successive rejects. 
In contrast, our algorithm is successive \emph{refinement} and is not necessarily monotone \textemdash a hypothesis class discarded earlier can be considered at a later point of time. 
\vspace{2mm}

\noindent \textbf{2. Regret depending on the Complexity of the smallest Hypothesis Class} - 
In the $K$ armed mixture MAB setting, {\ttfamily ALB-Norm}'s regret scale as $\widetilde{O}(\|\theta^*\|\sqrt{T})$, which is superior compared to state of art algorithms such as {\ttfamily OSOM} \cite{osom}, whose regret scales as $\widetilde{O}(L\sqrt{T})$, where $L$ is an upper bound on $\|\theta^*\|$ that is supplied as an input.
As a corollary, we get the `best of both worlds' guarantee of \cite{osom}, where if $\theta^* = 0$, our regret bound recovers known minimax regret guarantee of simple MAB.
Similarly, for the linear bandit setting with unknown support,
{\ttfamily ALB-Dim} achieves a regret of $\widetilde{O}(d^*\sqrt{T})$, where $d^* \leq d$ is the true sparsity of $\theta^*$.
This matches the regret obtained by oracle algorithms that know of the true sparsity $d^*$ \cite{sparse_bandit1,sparse_bandit2}.
We also apply our methodology to the case when there is a finite number of arms and obtain similar regret scaling as the oracle.
{\ttfamily ALB-Dim} is the first algorithm to obtain such model selection guarantees.
Prior state of art algorithm {\ttfamily ModCB} for model selection with dimension as a measure of complexity was proposed in \cite{foster_model_selection}, with a finite  set of arms, where the regret guarantee was sub-optimal compared to the oracle.
However, our regret bounds for dimension, though matches the oracle, depends on the minimum non-zero coordinate value and is thus not uniform over $\theta^*$.
Obtaining regret rates in this case that matches the oracle and is uniform over all $\theta^*$ is an interesting future work.



\vspace{2mm}

\noindent \textbf{3. Empirical Validation - } We conduct synthetic and real data experiments that demonstrate superior performance of {\ttfamily ALB} compared to state of art methods such as {\ttfamily OSOM} \cite{osom} in the mixture $K$ armed MAB setting and {\ttfamily OFUL} \cite{oful} in the linear bandit setting. We further observe, that the performance of {\ttfamily ALB} is close to that of the oracle algorithms that know the true complexity. 
This indicates that the performance gains from {\ttfamily ALB} is fundamental, and not artifacts of mathematical bounds.




\paragraph{Motivating Example:}
Our model selection framework is applicable to personalized news recommendation platforms, that recommend one of $K$ news outlets, to each of its users. 
The recommendation decisions to any fixed user, can be modeled as an instance of a MAB;
the arms are the $K$ different news outlets, and the platforms recommendation decision (to this user) on day $t$ is the arm played at time $t$.
On each day $t$, each news outlet $i$ reports a story, that can be modeled by the vectors $\alpha_{i,t}$, which can be obtained by embedding the stories into a fixed dimension vector space by some common embedding schemes.
The reward obtained by the platform in recommending news outlet $i$ to this user on day $t$ can be modeled as $\mu_i + \langle \alpha_{i,t},\theta^* \rangle$, where $\mu_i$ captures the  preference of this user to news outlet $i$ and 
the vector $\theta^*$ captures the ``interest" of the user. Thus, if a channel $i$ on day $t$, publishes a news article $\alpha_{i,t}$, that this user ``likes", then most likely the content $\alpha_{i,t}$ is ``aligned" to $\theta^*$ and have a large inner product $\langle \alpha_{i,t},\theta^* \rangle$.
Different users on the platform however may have different biases and $\theta^*$.
Some users have strong preference towards certain topics and will read content written by any outlet on this topic (these users will have a large value of $\|\theta^*\|$). Other users may be agnostic to  topics, but may prefer a particular news outlet a 
 lot (for ex. some users like fox news exclusively or CNN exclusively, regardless of the topic). These users will have low $\|\theta^*\|$.
 
 In such a multi-user recommendation application, we show that our algorithm {\ttfamily ALB-Norm} that tailors the model class for each user separately is more effective (lesser regret), than to, employ a (non-adaptive) linear bandit algorithm for each user.
 We further show that our algorithms are also more effective than state of art model selection algorithms such as {\ttfamily OSOM} \cite{osom}, which posits a `binary' model - users either assign a $0$ weight to topic or assign a potentially large weight to topic. 
  Furthermore the heterogeneous complexity in this application can also be captured by the cardinality of the support of $\theta^*$; different people are interested in different sub-vectors of $\theta^*$ which the recommendation platform is not aware of apriori.
In this context, our adaptive algorithm {\ttfamily ALB-Dim} that tailors to the interest of the individual user achieves better performance compared to non-adaptive linear bandit algorithms.

\section{Related Work}


 Model selection for MAB are only recently being studied~\cite{coral,ghosh2017misspecified}, with \cite{osom}, \cite{foster_model_selection} being the closest to our work.
{\ttfamily OSOM} was proposed in \cite{osom} for model selection in the $K$ armed mixture MAB from two hypothesis classes \textemdash a ``simple model" where $\|\theta^*\| = 0$, or a ``complex model", where $0 < \|\theta^*\| \leq L$.
{\ttfamily OSOM} was shown to obtain a regret guarantee of $O(\log(T))$ when the instance is simple and $\widetilde{O}(L\sqrt{T})$ otherwise.
We refine this to consider a \emph{continuum} of hypothesis classes and propose
{\ttfamily ALB-Norm}, which achieves regret $\widetilde{O}(\|\theta^*\|\sqrt{T})$, a superior guarantee (which we also empirically verify) compared to {\ttfamily OSOM}.
Model selection with dimension as a measure of complexity was recently initiated in  \cite{foster_model_selection}, where an algorithm {\ttfamily ModCB} was proposed. 
The setup considered in \cite{foster_model_selection} was that of contextual bandits \cite{chu2011contextual} with a fixed and finite number of arms.
{\ttfamily ModCB} in this setting was shown to achieve a regret scaling that is sub-optimal compared to the oracle.
In contrast, we consider the linear bandit setting with a continuum of arms \cite{oful}, and {\ttfamily ALB-Dim} achieves a regret scaling matching that of an oracle.
The continuum of arms allows {\ttfamily ALB-Dim} a finer exploration of arms, that enables it to learn the support of $\theta^*$ reliably and thus obtain regret matching that of the oracle.
However, our regret bounds depend on the magnitude of the minimum non-zero value of $\theta^*$ and is thus not uniform over all $\beta^*$. Obtaining regret rates matching the oracle that holds uniformly over all $\theta^*$ is an interesting future work.

{\ttfamily Corral} was proposed in \cite{coral}, by casting the optimal algorithm for each hypothesis class as an expert, with the forecaster's performance having low regret with respect to the best expert (best model class).
However, {\ttfamily Corral} can only handle finitely many hypothesis classes and is not suited to our setting with continuum hypothesis classes.


Adaptive algorithms for linear bandits have also been studied in different contexts from ours. The papers of \cite{locatelli,krishnamurthy2} consider problems where the arms have an unknown structure, and propose algorithms adapting to this structure to yield low regret. The paper \cite{easy_data2} proposes an algorithm in the adversarial bandit setup that adapt to an unknown structure in the adversary's loss sequence, to obtain low regret.
The paper of \cite{temporal} consider adaptive algorithms, when the distribution changes over time. 
In the context of online learning with full feedback, there have been several works addressing model selection \cite{online_mod_sel1,online_mod_sel2,online_mod_sel3,online_mod_sel4}.
In the context of statistical learning, model selection has a long line of work (for eg. \cite{vapnik_book}, \cite{massart}, \cite{lugosi_adaptive}, \cite{arlot2011margin}, \cite{cherkassky2002model} \cite{devroye_book}). 
However, the bandit feedback in our setups is much more challenging and a straightforward adaptation of algorithms developed for either statistical learning or full information to the setting with bandit feedback is not feasible.

\section{Norm as a measure of Complexity}
\label{sec:norm}
\subsection{Problem Formulation}
\label{sec:setup}

In this section, we formally define the problem. At each round $t \in [T]$, the player chooses one of the  $K$ available arms. Each arm has a context $\{\alpha_{i,t} \in \real^d\}_{i=1}^K$ that changes over time $t$. Similar to the standard stochastic contextual bandit framework, the context vectors for each arm is chosen independently of all other arms and of the past time instances. 

We assume that there exists an underlying parameter $\theta^* \in \real^d$ and biases $\{\mu_1,\ldots,\mu_K\}$ each taking value in $[-1,1]$ such that the mean reward of an arm is a linear function of the context of the arm. The reward for playing arm $i$ at time $t$ is given by,
$
    g_{i,t} = \mu_i + \langle \alpha_{i,t}, \theta^* \rangle + \eta_{i,t}, \label{eqn:lin_ban_model}
$, where $\{\eta_{i,t}\}_{t=1}^T$ are i.i.d zero mean and $\sigma$ sub-Gaussian noise. The context vector satisfies

$$\mathbb{E}[\alpha_{i,t}|\{\alpha_{j,s},\eta_{j,s}\}_{j \in [K],s \in [t-1]\}}] = 0,
$$ and 
$$\mathbb{E}[\alpha_{i,t}\alpha_{i,t}^\top|\{\alpha_{j,s},\eta_{j,s}\}_{j \in [K],s \in [t-1]\}}] \succcurlyeq \rho_{\min}\, I.$$

The above setting is popularly known as stochastic contextual bandit \cite{osom}. In the special case of $\theta^*=0$, the above model reduces to $ g_{i,t} = \mu_i + \eta_{i,t}$. Note that in this setting, the mean reward of arms are fixed, and not dependent on the context. Hence, this corresponds to a simple \emph{multi-armed bandit} setup and standard algorithms (like UCB \cite{auer2002finite}) can be used as a learning rule. At round $t$, we define $i^*_t = \mathrm{argmax}_{i \in [K]} \left[ \mu_i + \langle \theta^*, \alpha_{i,t}\rangle \right]$ as the best arm. Also let an algorithm play arm $A_t$ at round $t$. The regret of the algorithm upto time $T$ is given by,

\begin{align*}
    R(T) = \sum_{s=1}^T \left[\mu_{i^*_s} + \langle \theta^*,\alpha_{i^*_s,s}\rangle - \mu_{A_s} - \langle \theta^*,\alpha_{A_s,s} \rangle \right].
  \end{align*}
Throughout the paper, we use $C,C_1,..,c,c_1,..$ to denote positive universal constants, the value of which may differ in different instances.

 We define a new notion of complexity for stochastic linear bandits; and propose an algorithm that adapts to it. We define $\|\theta^*\|$ as the problem complexity for the linear bandit instance. Note that if $\|\theta^*\| =0$, the linear bandit model reduces to the simple multi-armed bandit setting. Furthermore, the cumulative regret $R(T)$ of linear bandit algorithms (like OFUL \cite{oful} and OSOM \cite{osom}) scales linearly with $\|\theta^*\|$ (\cite{osom}). Hence, $\|\theta^*\|$ constitutes a natural notion of model complexity. In Algorithm~\ref{algo:main_algo}, we propose an adaptive scheme which adapts to the true complexity of the problem, $\|\theta^*\|$. Instead of assuming an upper-bound on $\|\theta^*\|$, we use an initial exploration phase to obtain a rough estimate of $\|\theta^*\|$ and then successively refine it over multiple epochs. The cumulative regret of our proposed algorithm actually scales linearly with $\|\theta^*\|$.

\subsection{Adaptive Linear Bandit (norm)---({{\ttfamily ALB-Norm}} algorithm)}
\label{sec:method}
 We present the adaptive scheme in Algorithm~\ref{algo:main_algo}. Note that Algorithm~\ref{algo:main_algo} depends on the subroutine OFUL$^+$. Observe that at each iteration, we estimate the bias $\{\mu_1,\ldots,\mu_K\}$ and $\theta^*$ separately. The estimation of the bias involves a simple sample mean estimate with upper confidence level, and the estimation of $\theta^*$ involves building a confidence set that shrinks over time.

In order to estimate $\theta^*$, we use a variant of the popular OFUL \cite{oful} algorithm with arm bias. We refer to the algorithm as OFUL$^+$.  Algorithm~\ref{algo:main_algo} is epoch based, and over multiple epochs, we successively refine the estimate of  $\|\theta^*\|$. We start with a rough over-estimate of $\|\theta^*\|$ (obtained from a pure exploration phase), and based on the confidence set constructed at the end of the epoch, we update the estimate of $\|\theta^*\|$. We argue that this approach indeed correctly estimates $\|\theta^*\|$ with high probability over a sufficiently large time horizon $T$.



\begin{algorithm}[t!]
  \caption{Adaptive Linear Bandit (Norm)}
  \begin{algorithmic}[1]
 \STATE  \textbf{Input:} Initial exploration period $\tau$, the phase length $T_1$,  $\delta_1 > 0$, $\delta_s > 0$.
 \STATE Select an arm at random, sample rewards $2\tau$ times
 \STATE Obtain initial estimate ($b_1$) of $\|\theta^*\|$ according to Section~\ref{sec:initial_val}
 \FOR{$t = 1,2,\ldots,K$}
 \STATE Play arm $t$, receive reward $g_{t,t}$
 \ENDFOR
 \STATE Define $\mathcal{S} = \{g_{i,i}\}_{i=1}^K$ 
  \FOR{ epochs $i=1,2 \ldots, N $}
  \STATE Use $\mathcal{S}$ as pure-exploration reward
  \STATE Play OFUL$_{\delta_i}^+(b_{i})$ until the end of epoch $i$ (denoted by $\mathcal{E}_i$)
  \STATE At $t=\mathcal{E}_i$, refine estimate of $\|\theta^*\|$ as, $ b_{i+1} = \max_{\theta \in \mathcal{C}_{\mathcal{E}_i}} \|\theta\|$
  \STATE Set $T_{i+1} = 2 T_{i}$, $\delta_{i+1} = \frac{\delta_i}{2}$.
    \ENDFOR
    \STATE \underline{\texttt{OFUL$^+_\delta(b)$:}}
     \STATE  \textbf{Input:} Parameters $b$, $\delta >0$, number of rounds $\Tilde{T}$
 \FOR{$t = 1,2, \ldots, \Tilde{T} $}
 \STATE Select the best arm estimate as $
     j_t = \mathrm{argmax}_{i\in [K]} \left[ \max_{\theta \in \mathcal{C}_{t-1}} \{ \Tilde{\mu}_{i,t-1} + \langle \alpha_{i,t}, \theta \rangle \} \right]$, \\
     where $\Tilde{\mu}_{i,t}$ and $\mathcal{C}_t$ are given in Section~\ref{sec:method}.
 \STATE Play arm $j_t$, and update $\{\Tilde{\mu}_{i,t}\}_{i=1}^K$ and $\mathcal{C}_{t}$
 \ENDFOR
  \end{algorithmic}
  \label{algo:main_algo}
\end{algorithm}

We now discuss the algorithm OFUL$^+$. A variation of this was proposed in \cite{osom} in the context of model selection between linear and standard multi-armed bandits. We use $\Tilde{\mu}_{i,t}$ to address the bias term, which we define shortly. The parameters $b$ and $\delta$ are used in the construction of the confidence set $\mathcal{C}_t$. Suppose OFUL$^+$ is run for a total of $\Tilde{T}$ rounds and plays arm $A_s$ at time $s$. Let $T_i(t)$ be the number of times OFUL$^+$ plays arm $i$ until time $t$. Also, let $b$ be the current estimate of $\|\theta^*\|$. We define,
$$
    \Bar{g}_{i,t} = \frac{1}{T_i(t)}\sum_{s=1}^t g_{i,s} \ind{A_s =t}.
$$ With this, we have \footnote{For complete expression, see Appendix~\ref{app:oful}} $$
    \Tilde{\mu}_{i,t} = \Bar{g}_{i,t} + c (\sigma + b) \sqrt{\frac{d}{T_i(t)}\log \left(\frac{1}{\delta}\right)}. $$ The confidence interval $\mathcal{C}_t$, is defined as $$
    \mathcal{C}_t =  \lbrace \theta \in \real^d: \|\theta - \hat{\theta}_t\| \leq \mathcal{K}_\delta(b,t,\Tilde{T})  \rbrace, $$ where $\hat{\theta}_t$ is the least squares estimate defined as $$
    \hat{\theta}_t = \left( \mathbf{\alpha}^\top_{K+1:t} \mathbf{\alpha}_{K+1:t} + I \right)^{-1} \mathbf{\alpha}^\top_{K+1:t} G_{K+1:t}$$ 
with $\alpha_{K+1:t}$ as a matrix having rows $\alpha_{A_{K+1},K+1}^\top, \ldots, \alpha_{A_t,t}^\top$ and $G_{K+1:t} = [g_{A_{K+1},K+1} - \Tilde{\mu}_{A_{K+1},K+1}, \ldots, g_{A_t,t} - \Tilde{\mu}_{A_t,t}]^\top$. The radius of $\mathcal{C}_t$ is given by (see Appendix~\ref{app:oful} for complete expression), $$\mathcal{K}_\delta(b,t,\Tilde{T}) = c \frac{(\sigma \sqrt{d} + b)}{\rho_{\min} \sqrt{t}}\sqrt{\log (K \Tilde{T}/\delta)}.$$  Lemma 2 of \cite{osom} shows that $\theta^* \in \mathcal{C}_t$ with probability  \footnote{There is a typo in the proof of regret in \cite{osom}. We correct the typo, and modify the definition of $\Tilde{\mu}_{i,t}$ and $\mathcal{K}_\delta(b,t,\Tilde{T})$. As a consequence, the high probability bounds change a little.} at least $1-4\delta$. 

\subsection{Construction of initial estimate $b_1$}
\label{sec:initial_val}

We select an arm at random (without loss of generality, assume that this is arm $1$), and sample rewards (in an i.i.d fashion) for $2\tau$ times, where $\tau>0$ is a parameter to be fed to the Algorithm~\ref{algo:main_algo}. In order to kill the bias of arm $1$, we take pairwise differences and form: $ y(1) = g_{1,1} - g_{1,2}, \,\,
    y(2) = g_{1,3} - g_{1,4}
$ and so on. Augmenting $y(.)$, we obtain: $ Y = \Tilde{X}\theta^* + \Tilde{\eta},
$ where the $i$-th row of $\Tilde{X}$ is $(\alpha_{1,2i+1} - \alpha_{1,2i+2})^\top$, the $i$-th element of $\Tilde{\eta}$ is $\eta_{1,2i+1} - \eta_{1,2i+2}$. Hence, the least squares estimate, $\widehat{\theta}^{(\ell s)}$ satisfies $
    \|\widehat{\theta}^{(\ell s)} - \theta^*\| \leq \sqrt{2}\sigma\sqrt{\frac{d}{\tau}\log(1/\delta_s)}$, with probability exceeding $1-\delta_s$ (\cite{wainwright2019high}). We set the initial estimate 
    $$b_1 = \max \lbrace \|\widehat{\theta}^{(\ell s)}\| + \sqrt{2}\sigma\sqrt{\frac{d}{\tau}\log(1/\delta_s)}, \,\, 1  \rbrace $$
and this satisfies $b_1 \geq \|\theta^*\|$ and $b_1 \geq 1$ with probability at least $1-\delta_s$.


\subsection{Regret Guarantee of Algorithm~\ref{algo:main_algo}}
\label{sec:regret}
We now obtain an upper bound on the cumulative $R(T)$ with Algorithm~\ref{algo:main_algo} with high probability. For theoretical tractability, we assume that OFUL$^+$ restarts at the start of each epoch. We have the following lemma regarding the sequence $\{b_i\}_{i=1}^\infty$ of estimates of $\|\theta^*\|$:
\begin{lemma}
\label{lem:b_seq}
With probability exceeding $1-8\delta_1 - \delta_s$, the sequence $\{b_i\}_{i=1}^\infty$ converges to $\|\theta^*\|$ at a rate $\mathcal{O}(\frac{i}{2^i})$, and we obtain $
    b_{i} \leq \left( c_1 \|\theta^*\| + c_2 \right) $ for all $i$, provided $ T_1 \geq C_1 \left ( \max\{p,q\}  \, b_1 \right)^2 \, d$, where $C_1 > 9$, and $p = [\frac{14 \log (\frac{2KT_1}{\delta_1}) }{\sqrt{\rho_{\min}}} ], \quad q = [\frac{ 2C \sigma \log (\frac{2KT_1}{\delta_1})}{\sqrt{\rho_{\min}}}].$
\end{lemma}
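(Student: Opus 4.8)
The plan is to condition on a single high-probability ``good'' event and then run an essentially deterministic recursive argument on the estimates $\{b_i\}$.

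\textbf{Step 1 (good event and the probability budget).} First I would define $\mathcal{G}$ as the intersection of two events: (i) the initial estimate is valid, $b_1\ge\|\theta^*\|$ and $b_1\ge 1$, which by Section~\ref{sec:initial_val} holds with probability at least $1-\delta_s$; and (ii) for \emph{every} epoch $i\ge1$ the confidence set captures the truth, $\theta^*\in\mathcal{C}_{\mathcal{E}_i}$. By Lemma~2 of \cite{osom}, event (ii) fails in epoch $i$ with probability at most $4\delta_i$, and since the schedule sets $\delta_i=\delta_1/2^{i-1}$, a union bound over all epochs costs $\sum_{i\ge1}4\delta_i=4\delta_1\sum_{i\ge1}2^{-(i-1)}=8\delta_1$. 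Hence $\Prob(\mathcal{G})\ge 1-8\delta_1-\delta_s$, exactly the probability in the statement; note that the geometric decay of $\delta_i$ is precisely what makes this union bound summable. Everything below is argued on $\mathcal{G}$.

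\textbf{Step 2 (one–step recursion).} On $\mathcal{G}$, the set $\mathcal{C}_{\mathcal{E}_i}$ is a Euclidean ball centred at $\hat\theta_{\mathcal{E}_i}$ of radius $\mathcal{K}_i:=\mathcal{K}_{\delta_i}(b_i,T_i,T_i)$ that contains $\theta^*$. The update $b_{i+1}=\max_{\theta\in\mathcal{C}_{\mathcal{E}_i}}\|\theta\|=\|\hat\theta_{\mathcal{E}_i}\|+\mathcal{K}_i$ then obeys, via the triangle inequality applied to $\|\hat\theta_{\mathcal{E}_i}-\theta^*\|\le\mathcal{K}_i$,
\[
\|\theta^*\|\ \le\ b_{i+1}\ \le\ \|\theta^*\|+2\mathcal{K}_i .
\]
The left inequality shows every $b_i$ stays a valid over-estimate; the right inequality is the engine of convergence. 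Writing out the radius I would record $2\mathcal{K}_i=\gamma_i\,(\sigma\sqrt{d}+b_i)$ with $\gamma_i:=\tfrac{2c}{\rho_{\min}}\,\tfrac{\sqrt{\log(KT_i/\delta_i)}}{\sqrt{T_i}}$.

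\textbf{Step 3 (uniform boundedness — the crux).} The difficulty, and what I expect to be the main obstacle, is that $\mathcal{K}_i$ depends on the current estimate $b_i$ through the numerator $\sigma\sqrt{d}+b_i$, so controlling $b_i$ and controlling the radius are circular. I would break this coupling by showing $b_i\mapsto b_{i+1}$ is a contraction. Since $T_i=2^{i-1}T_1$ grows geometrically while $\log(KT_i/\delta_i)$ grows only linearly in $i$, the factor $\gamma_i$ is decreasing, hence $\gamma_i\le\gamma_1$ for all $i$. The hypothesis $T_1\ge C_1(\max\{p,q\}\,b_1)^2 d$ is exactly what forces $\gamma_1\le\tfrac12$ while keeping the additive term $\gamma_1\sigma\sqrt{d}$ of constant order — the factor $d$ absorbs $\sigma\sqrt d$, the $b_1^2$ absorbs the norm appearing in the radius, and $p,q$ carry the $\log$ and $\rho_{\min}$ dependence. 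Then $b_{i+1}\le\|\theta^*\|+\gamma_1\sigma\sqrt d+\tfrac12 b_i$, and unrolling this linear recursion from $b_1$ gives $b_i\le 2^{-(i-1)}b_1+2(\|\theta^*\|+\gamma_1\sigma\sqrt d)$. Because $b_1\le\|\theta^*\|+\mathcal{O}(1)$, this yields $b_i\le c_1\|\theta^*\|+c_2$ uniformly in $i$, as claimed.

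\textbf{Step 4 (convergence rate).} Finally I would feed the uniform bound $b_i\le B:=c_1\|\theta^*\|+c_2$ back into Step~2 to get $b_{i+1}-\|\theta^*\|\le\gamma_i(\sigma\sqrt d+B)$. Substituting the doubling schedule, the controlling ratio is $\gamma_i\propto \sqrt{\log(KT_1/\delta_1)+2(i-1)\log2}\,/\,2^{(i-1)/2}$, so the right–hand side decays geometrically in $i$ (up to the mild polynomial-in-$i$ factor contributed by the logarithm), driving $b_i\to\|\theta^*\|$ at the stated $\mathcal{O}(i/2^i)$-type rate. The whole argument is thus deterministic once $\mathcal{G}$ is fixed; the only genuinely delicate point is verifying that the explicit threshold on $T_1$ in terms of $p,q,b_1,d$ suffices to pin $\gamma_1$ below $\tfrac12$ and to make $\gamma_1\sigma\sqrt d$ a constant, which closes the induction in Step~3.
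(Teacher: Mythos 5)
Your overall architecture is the same as the paper's: the same good event (initial estimate valid, plus $\theta^*\in\mathcal{C}_{\mathcal{E}_i}$ for every epoch, with the union bound $\sum_{i\ge 1}4\delta_i=8\delta_1$ giving the stated $1-8\delta_1-\delta_s$), the same one-step inequality $\|\theta^*\|\le b_{i+1}\le\|\theta^*\|+2\mathcal{K}_i$ from the confidence ball, and the same mechanism---the hypothesis on $T_1$ turning the update into a contraction---for both uniform boundedness and the exponential rate. The paper tracks the epoch index through the sequence $t_i=i/2^{(i-1)/2}$ (with $\sup_i t_i=1.5$) instead of your monotonicity claim for $\gamma_i$, but these are equivalent bookkeeping devices.

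However, one step in your Step 3 would fail as written. After unrolling to $b_i\le 2^{-(i-1)}b_1+2(\|\theta^*\|+\gamma_1\sigma\sqrt{d})$ you invoke ``$b_1\le\|\theta^*\|+\mathcal{O}(1)$'' to conclude $b_i\le c_1\|\theta^*\|+c_2$. That inequality is not available: on the good event one only has $b_1\le\max\{\|\theta^*\|+2\sqrt{2}\sigma\sqrt{(d/\tau)\log(1/\delta_s)},\,1\}$, and the error term is $O(1)$ only if $\tau\gtrsim\sigma^2 d\log(1/\delta_s)$, an assumption the paper never makes. Moreover, the regime $b_1\gg\|\theta^*\|$ is exactly the one the lemma is designed for (the experiments take $b_1=10$, $\|\theta^*\|=0.1$), and there the term $2^{-(i-1)}b_1$ at small $i$ is not of the form $c_1\|\theta^*\|+c_2$ for universal constants. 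The fix is to use the full strength of the hypothesis $T_1\ge C_1(\max\{p,q\}\,b_1)^2 d$: it gives not merely $\gamma_1\le\tfrac12$ but $\gamma_1\lesssim 1/(b_1\sqrt{d})$, so the very first step of the recursion already yields $b_2\le\|\theta^*\|+\gamma_1\sigma\sqrt{d}+\gamma_1 b_1\le\|\theta^*\|+O(1)$ regardless of how large $b_1$ is, and unrolling from $b_2$ (rather than from $b_1$) gives the claim with universal constants. This is precisely how the paper's proof proceeds---every occurrence of $b_1$ in its unrolled bounds is damped by a factor $p/\sqrt{T_1}\le 1/(3b_1\sqrt{d})$---and it is the ingredient you yourself identified (``the $b_1^2$ absorbs the norm appearing in the radius'') but did not deploy, having discarded the $1/b_1$ factor by weakening $\gamma_1$ to $\tfrac12$ before multiplying by $b_1$.
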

Hence, the sequence converges to $\|\theta^*\|$ at an exponential rate. We have the following guarantee on the cumulative regret $R(T)$:
\begin{theorem}
\label{thm:regret_norm_based}
Suppose $T_1 > \max\{T_{\min}(\delta,T), C_1 \left ( \max\{p,q\}  \, b_1 \right)^2 \, d \}$, where $C_1 > 9$ and $T_{\min}(\delta,T) = ( \frac{16}{\rho^2_{\min}} + \frac{8}{3\rho_{\min}}) \log ( \frac{2dT}{\delta})$. Then, with probability at least $1-18\delta_1 -\delta_s$, we have
\begin{align*}
    R(T) \leq  C_1 (2\tau + K) \|\theta^*\| + C( \|\theta^*\| + 1) (\sqrt{K} + \sqrt{d}) \, \sqrt{T} \, \log (K T_1/\delta_1 ) \log (T/T_1).
\end{align*}
\end{theorem}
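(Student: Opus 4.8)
The plan is to condition on a single high-probability event on which simultaneously: (i) the conclusion of Lemma~\ref{lem:b_seq} holds, so the norm estimates obey $\|\theta^*\| \le b_i \le c_1\|\theta^*\| + c_2$ for \emph{every} epoch $i$; (ii) the OFUL$^+$ confidence set contains the truth, $\theta^* \in \mathcal{C}_t$, throughout every epoch; and (iii) the design matrix accumulated in each epoch is well-conditioned, i.e. its minimum eigenvalue is bounded below by a constant multiple of $\rho_{\min}$. Event (iii) is exactly what the hypothesis $T_1 > T_{\min}(\delta,T)$ buys: the expression $T_{\min}(\delta,T) = (\tfrac{16}{\rho_{\min}^2} + \tfrac{8}{3\rho_{\min}})\log(\tfrac{2dT}{\delta})$ is the sample-size threshold from a matrix Bernstein bound guaranteeing $\alpha_{K+1:t}^\top\alpha_{K+1:t} \succcurlyeq \tfrac{\rho_{\min}}{2}\,t\,I$, which legitimizes the confidence radius $\mathcal{K}_\delta(b,t,\tilde T)$. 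The probability budget is then assembled by a union bound: Lemma~\ref{lem:b_seq} already costs $8\delta_1 + \delta_s$, the events $\{\theta^* \in \mathcal{C}_t\}$ cost at most $\sum_{i\ge 1} 4\delta_i = 4\delta_1\sum_{i\ge 1}2^{-(i-1)} = 8\delta_1$ since $\delta_i = \delta_1 2^{-(i-1)}$, and the residual $\order(\delta_1)$ covers the design-concentration events, summing to the stated $1 - 18\delta_1 - \delta_s$.

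Next I would split the regret as $R(T) = R_{\mathrm{init}} + \sum_{i=1}^N R_i$, where $R_{\mathrm{init}}$ is the regret of the $2\tau + K$ pure-exploration and round-robin rounds and $R_i$ is the regret incurred while OFUL$^+(b_i)$ runs during epoch $i$. Since $\mu_i \in [-1,1]$ and the contexts are bounded, every instantaneous regret is at most $2 + 2\|\theta^*\|$, so $R_{\mathrm{init}} \le (2\tau + K)(2 + 2\|\theta^*\|)$, which is absorbed into the first term $C_1(2\tau+K)\|\theta^*\|$. For a single epoch I would invoke the OFUL$^+$ regret analysis (the corrected version of Lemma~2 of \cite{osom}): on the good event the per-epoch regret splits into a bias-estimation part, driving the $\sqrt K$ factor through a Cauchy--Schwarz step over the $K$ arms, and a linear part, driving the $\sqrt d$ factor through the confidence radius $\mathcal{K}_{\delta_i}$. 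Crucially, because (i) gives $b_i \le c_1\|\theta^*\| + c_2$, the norm parameter entering the bound is $\order(\|\theta^*\| + 1)$ rather than an a priori upper bound $L$, yielding $R_i \le C(\|\theta^*\| + 1)(\sqrt K + \sqrt d)\sqrt{T_i}\,\sqrt{\log(KT_i/\delta_i)}$.

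Finally I would sum over epochs. The doubling schedule $T_{i+1} = 2T_i$ makes $\sum_{i=1}^N \sqrt{T_i}$ a geometric series dominated by its last term, hence $\order(\sqrt{T})$; the number of epochs is $N = \order(\log(T/T_1))$; and $\delta_i = \delta_1 2^{-(i-1)}$ together with $T_i \le T$ lets me bound every $\log(KT_i/\delta_i)$ by a constant multiple of $\log(KT_1/\delta_1)$. Collecting these contributions produces the second term $C(\|\theta^*\|+1)(\sqrt K + \sqrt d)\sqrt T \log(KT_1/\delta_1)\log(T/T_1)$.

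The main obstacle is the coupling across epochs. The regret bound in epoch $i$ is valid only when $b_i \ge \|\theta^*\|$, so that $\theta^*$ lies in the feasible set used by OFUL$^+$; yet $b_i = \max_{\theta \in \mathcal{C}_{\mathcal{E}_{i-1}}}\|\theta\|$ is a random quantity produced by the \emph{previous} epoch's confidence set. Establishing that this estimate is simultaneously a valid upper bound and tight (of order $\|\theta^*\|$) for all epochs at once is precisely the delicate inductive/union-bound argument, which is exactly the content of Lemma~\ref{lem:b_seq} that I invoke here. Granting that lemma, the remaining difficulty is purely the bookkeeping of the geometric summation and ensuring that the growing $\log(1/\delta_i)$ terms do not inflate the rate beyond the stated $\log(KT_1/\delta_1)\log(T/T_1)$.
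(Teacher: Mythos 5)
Your proposal is correct and follows essentially the same route as the paper's proof: invoke Lemma~\ref{lem:b_seq} to uniformly bound the $b_i$'s by $c_1\|\theta^*\|+c_2$, bound the per-epoch regret via the OFUL$^+$ guarantee (Theorem~3 of \cite{osom}) which scales linearly in $b_i$, and then sum the geometric series $\sum_i \sqrt{T_i} = \order(\sqrt{T})$ under the doubling schedule while union-bounding the failure probabilities ($8\delta_1+\delta_s$ from the lemma plus $\order(\delta_1)$ from the per-epoch events) to reach $1-18\delta_1-\delta_s$. The only cosmetic differences are in how the per-epoch failure probability is itemized (the paper charges $5\delta_i$ per epoch directly from the OSOM theorem rather than splitting confidence-set and design-matrix events) and your $\sqrt{\log(KT_i/\delta_i)}$ factor versus the paper's $\log(KT_i/\delta_i)$, neither of which changes the argument.
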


\begin{remark}
Note that the regret bound depends on the problem complexity $\|\theta^*\|$, and we prove that Algorithm~\ref{algo:main_algo} adapts to this complexity. Ignoring the log factors, Algorithm~\ref{algo:main_algo} has a regret of $\Tilde{\mathcal{O}}( (1+\|\theta^*\|)(\sqrt{K} + \sqrt{d}) \sqrt{T})$ with high probability.
\end{remark}
\begin{remark}
(Matches Linear Bandit algorithm) Note that the above bound matches the regret guarantee of the linear bandit algorithm with bias as presented in \cite{osom}.
\end{remark}
\begin{remark}
 (Matches UCB when $\theta^* = 0$) When $\theta^* = 0$ (the simplest model, without any contextual information), Algorithm~\ref{algo:main_algo} recovers the minimax regret of UCB algorithm. Indeed, substituting $\|\theta^*\|=0$ in the above regret bound yields $ R(T) = \mathcal{O}(\sqrt{K T})$, with high probability, provided $K>d$.  Hence, we obtain the ``best of both worlds'' results with simple model ($\theta^* =0$) and contextual bandit model ($\theta^* \neq 0$).
\end{remark}

\section{Dimension as a Measure of Complexity - Continuum Armed Setting}
\label{sec:dimension_adaptation}

In this section, we consider the standard stochastic linear bandit model in $d$ dimensions \cite{oful}, with the dimension as a measure of complexity. 
The setup in this section is almost identical to that in Section \ref{sec:setup}, with the $0$ arm biases and a continuum collection of arms denoted by the set $\mathcal{A}:= \{x \in \mathbb{R}^d:\|x\| \leq 1\}$\footnote{Our algorithm can be applied to any compact set $\mathcal{A} \subset \mathbb{R}^d$, including the finite set as shown in Appendix \ref{appendix-comparision}.}
Thus, the mean reward from any arm $x \in \mathcal{A}$ is $ \langle x,\theta^* \rangle $, where $\|\theta^*\| \leq 1$. 
We assume that $\theta^*$ is $d^* \leq d$ sparse, where $d^*$ is apriori unknown to the algorithm.
Thus, unlike in Section \ref{sec:norm}, there is no i.i.d. context sampling in this section.
We consider a sequence of $d$ nested hypothesis classes, where each hypothesis class $i \leq d$, models $\theta^*$ as a $i$ sparse vector. The goal of the forecaster is to minimize the regret, namely $R(T) \coloneqq \sum_{t=1}^T \left[ \langle x^*_t - x_t,\theta^* \rangle\right]$, where at any time $t$, $x_t$ is the action recommended by an algorithm and $x^*_t = \mathrm{argmax}_{x \in \mathcal{A}} \langle x,\theta^*\rangle$. The regret $R(T)$ measures the loss in reward of the forecaster with that of an oracle that knows $\theta^*$ and thus can compute $x^*_t$ at each time.

\subsection{{\ttfamily ALB-Dim} Algorithm}


The algorithm is parametrized by $T_0 \in \mathbb{N}$, which is given in Equation (\ref{eqn:T_0_defn}) in the sequel and slack $\delta \in (0,1)$.
As in the previous case, {\ttfamily ALB-Dim} proceeds in phases numbered $0,1,\cdots$ which are non-decreasing with time. 
At the beginning of each phase, {\ttfamily ALB-Dim} makes an estimate of the set of non-zero coordinates of $\theta^*$, which is kept fixed throughout the phase.
Concretely, each phase $i$ is divided into two blocks - {\em (i)} a regret minimization block lasting $25^i T_0$ time slots, {\em (ii)} followed by a random exploration phase lasting $5^i \lceil\sqrt{T_0}\rceil$ time slots.
Thus, each phase $i$ lasts for a total of $25^iT_0 + 5^i \lceil \sqrt{T_0} \rceil$ time slots.
At the beginning of each phase $i \geq 0$, $\mathcal{D}_i \subseteq [d]$ denotes the set of `active coordinates', namely the estimate of the non-zero coordinates of $\theta^*$.
Subsequently, in the regret minimization block of phase $i$, a fresh instance of OFUL \cite{oful} is spawned, with the dimensions restricted only to the set $\mathcal{D}_i$ and probability parameter $\delta_i:= \frac{\delta}{2^i}$. In the random exploration phase, at each time, one of the possible arms from the set $\mathcal{A}$ is played chosen uniformly and independently at random. 
At the end of each phase $i\geq 0$,  {\ttfamily ALB-Dim} forms an estimate $\widehat{\theta}_{i+1}$ of $\theta^*$, by solving a least squares problem using all the random exploration samples collected till the end of phase $i$.
The active coordinate set $\mathcal{D}_{i+1}$, is then the coordinates of $\widehat{\theta}_{i+1}$ with  magnitude exceeding $2^{-(i+1)}$.
The pseudo-code is provided in Algorithm \ref{algo:main_algo_dimensions_unknown}, where, $\forall i \geq 0$, $S_i$ in lines $15$ and $16$ is the total number of random-exploration samples in all phases upto and including $i$.

\begin{algorithm}[t!]
  \caption{Adaptive Linear Bandit (Dimension)}
  \begin{algorithmic}[1]
 \STATE  \textbf{Input:} Initial Phase length $T_0$ and slack $\delta > 0$.
 \STATE $\widehat{\theta}_0 = \mathbf{1}$, $T_{-1}=0$
 \FOR {Each epoch $i \in \{0,1,2,\cdots\}$}
 \STATE $T_i = 25^{i} T_0$, $\quad$  $\varepsilon_i \gets \frac{1}{2^{i}}$, $\quad$  $\delta_i \gets \frac{\delta}{2^{i}}$
 \STATE $\mathcal{D}_i := \{i : |\widehat{\theta}_i| \geq \frac{\varepsilon_i}{2} \}$
 \FOR {Times $t \in \{T_{i-1}+1,\cdots,T_i\}$}
 \STATE Play $\text{OFUL}(1,\delta_i)$ only restricted to coordinates in $\mathcal{D}_i$. Here $\delta_i$ is the probability slack parameter and $1$ represents $\|\theta^*\| \leq 1$.
 \ENDFOR
 \FOR {Times $t \in \{T_i+1,\cdots,T_i + 5^i\sqrt{T_0}\}$}
 \STATE Play an arm from the action set $\mathcal{A}$ chosen uniformly and independently at random.
 \ENDFOR
 \STATE $\boldsymbol{\alpha}_i \in \real^{S_i \times d}$ with each row being  the arm played during all random explorations in the past.
 \STATE $\boldsymbol{y}_i \in \real^{S_i}$  with $i$-th entry being the observed reward at the $i$-th random exploration in the past
 \STATE $\widehat{\theta}_{i+1} \gets (\boldsymbol{\alpha}_i^T\boldsymbol{\alpha}_i)^{-1}\boldsymbol{\alpha}_i\mathbf{y}_i$, is a $d$ dimensional vector
 \ENDFOR
  \end{algorithmic}
  \label{algo:main_algo_dimensions_unknown}
\end{algorithm}

\subsection{Main Result}

We first specify, how to set the input parameter $T_0$, as function of $\delta$.
For any $N \geq d$, denote by $A_N$ to be the $N \times d$ random matrix with each row being a vector sampled uniformly and independently from the unit sphere in $d$ dimensions.
Denote by $M_N := \frac{1}{N} \mathbb{E}[A_N^TA_N]$, and by $\lambda_{max}^{(N)},\lambda_{min}^{(N)}$, to be the largest and smallest eigenvalues of $M_N$. Observe that as $M_N$ is positive semi-definite ($0 \leq \lambda_{min}^{(N)}\leq\lambda_{max}^{(N)}$) and almost-surely full rank, i.e., $\mathbb{P}[\lambda_{min}^{(N)} > 0] = 1$.
The constant $T_0$ is the smallest integer such that
\begin{equation}
    \sqrt{T_0} \geq \max \left ( \frac{32\sigma^2}{(\lambda_{min}^{(\lceil \sqrt{T_0} \rceil)})^2}\ln (2d/\delta), \frac{4}{3} \frac{(6\lambda_{max}^{(\lceil \sqrt{T_0}\rceil)}+\lambda_{min}^{(\lceil \sqrt{T_0}\rceil)})(d+\lambda_{max}^{(\lceil \sqrt{T_0}\rceil)})}{(\lambda_{min}^{(\lceil \sqrt{T_0}\rceil)})^2}\ln ( 2d/\delta) \right )
    \label{eqn:T_0_defn}
\end{equation}

\begin{remark}
$T_0$ in Equation (\ref{eqn:T_0_defn}) is chosen such that, at the end of phase $0$, 
$
    \mathbb{P}[||\widehat{\theta}_0 - \theta^*||_{\infty} \geq 1/2 ] \leq \delta.
$
\end{remark}
A formal statement of the Remark is provided in Lemma \ref{lem:reg_bounds} in Appendix \ref{sec:proofs}.

\begin{theorem}
Suppose Algorithm \ref{algo:main_algo_dimensions_unknown} is run with input parameters $\delta \in (0,1)$, and $T_0$ as given in Equation (\ref{eqn:T_0_defn}), then with probability at-least $1-\delta$, the regret after a total of $T$ arm-pulls satisfies
\begin{align*}
R_T &\leq \frac{50}{{\gamma^{4.65}}}T_0
+ 25 \sqrt{T} [ 1 +  4 \sqrt{d^*\ln ( 1 + \frac{25T}{d^*} )} (1 + \sigma\sqrt{2 \ln ( \frac{T}{T_0\delta} ) + d^* \ln ( 1+\frac{25T}{d^*})})].
\end{align*}
The parameter $\gamma > 0$ is the minimum  magnitude of the non-zero coordinate of $\theta^*$, i.e., $\gamma = \min \{|\theta^*_i| : \theta^*_i \neq 0 \}$ and $d^*$ the sparsity of $\theta^*$, i.e., $d^* = |\{i:\theta^*_i\neq 0\}|$.
\label{thm:adaptive_dimension}
\end{theorem}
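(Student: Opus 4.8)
The plan is to split $R_T$ on a single high-probability ``good'' event into three pieces: the regret of the random-exploration blocks, the regret of the OFUL blocks in the phases \emph{before} $\mathrm{supp}(\theta^*)$ is correctly recovered, and the regret of the OFUL blocks \emph{after} recovery. The good event is that every least-squares estimate $\widehat\theta_{i+1}$ is $\ell_\infty$-close to $\theta^*$ and every OFUL instance meets its confidence guarantee. First I would establish the estimation guarantee: $\widehat\theta_{i+1}$ is an ordinary least-squares fit on the $S_i=\sum_{k=0}^i 5^k\lceil\sqrt{T_0}\rceil$ i.i.d.\ random-exploration samples, whose Gram matrix concentrates around $M_{S_i}$ through the eigenvalue bounds $\lambda_{\min}^{(N)},\lambda_{\max}^{(N)}$ (matrix Bernstein), which combined with sub-Gaussian noise concentration gives a coordinate-wise error of order $\sigma\sqrt{\ln(d/\delta_i)/S_i}$. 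The crucial quantitative point is that $S_i$ grows like $5^i$ while the inclusion threshold $\varepsilon_i/2=2^{-(i+1)}$ shrinks like $2^{-i}$; since $5^{-1/2}<2^{-1}$ the error decays strictly faster than the threshold, so once the error dips below the threshold it stays below at every later phase, and the choice of $T_0$ in (\ref{eqn:T_0_defn}) seeds this at phase $0$ (Lemma \ref{lem:reg_bounds}). A union bound over phases with $\delta_i=\delta/2^i$ keeps the total failure probability at $O(\delta)$.

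Next I would pin down the random phase after which the support is exactly recovered. On the good event, a coordinate $j$ with $\theta^*_j=0$ satisfies $|\widehat\theta_{i+1,j}|\le\text{error}<\text{threshold}$, so $j\notin\mathcal{D}_{i+1}$, while for $j\in\mathrm{supp}(\theta^*)$ we have $|\widehat\theta_{i+1,j}|\ge\gamma-\text{error}$, which exceeds the threshold as soon as $\gamma\ge 2^{-i}$, i.e.\ once $i\ge i^\star:=\lceil\log_2(1/\gamma)\rceil$. Hence $\mathcal{D}_i=\mathrm{supp}(\theta^*)$ for all $i\ge i^\star$. The total number of slots spent in all phases before $i^\star$ is $\sum_{i<i^\star}(25^iT_0+5^i\lceil\sqrt{T_0}\rceil)=O(25^{i^\star}T_0)$, and since $25^{i^\star}=\gamma^{-\log_2 25}$ with $\log_2 25=4.64\ldots$, bounding the per-round regret crudely by $2$ (which is valid regardless of OFUL's behavior in these phases) yields the burn-in term $\tfrac{50}{\gamma^{4.65}}T_0$.

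Finally I would bound the two remaining $\sqrt T$ contributions. Each exploration block of phase $i$ costs at most $2\cdot 5^i\lceil\sqrt{T_0}\rceil$, and summing the geometric series up to the final phase $P$ (where $25^PT_0\asymp T$, hence $5^P\asymp\sqrt{T/T_0}$) gives $O(\sqrt T)$, the lone $25\sqrt T$ term. For $i\ge i^\star$ the OFUL instance is restricted to $\mathcal{D}_i=\mathrm{supp}(\theta^*)$, which still contains the global optimal arm $\theta^*/\|\theta^*\|$ (it is supported on $\mathrm{supp}(\theta^*)$), so the restriction is lossless and the restricted problem is a genuine $d^*$-dimensional linear bandit; applying the OFUL bound with dimension $d^*$, arm/parameter norm $1$, noise $\sigma$, and slack $\delta_i$ to a phase of length $25^iT_0\le T$ gives per-phase regret $4\sqrt{25^iT_0\,d^*\ln(1+25T/d^*)}\bigl(1+\sigma\sqrt{2\ln(T/(T_0\delta))+d^*\ln(1+25T/d^*)}\bigr)$, and $\sum_i\sqrt{25^iT_0}=O(\sqrt T)$ by the geometric sum. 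Collecting the burn-in term, the exploration term, and the post-recovery OFUL term, together with the $\delta$ probability budget, produces the stated inequality.

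\textbf{The main obstacle.} The hardest step is the uniform-over-phases estimation statement: showing that a single event of probability $\ge 1-\delta$ simultaneously controls the $\ell_\infty$ error at every one of the infinitely many phases, and in particular the \emph{absorbing} property that the error remains below the shrinking threshold $2^{-(i+1)}$ forever once it first falls below it. This rests on the rate comparison $5^{-1/2}<2^{-1}$ and on uniform conditioning of the random-design Gram matrices, which is precisely what the delicate choice of $T_0$ in (\ref{eqn:T_0_defn}) is engineered to guarantee.
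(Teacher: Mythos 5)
Your proposal is correct and follows essentially the same route as the paper's proof: a union-bounded good event under which every least-squares estimate is $\ell_\infty$-accurate at rate $2^{-i}$ (the paper's Lemma \ref{lem:linear_reg_guarantee} and Corollary \ref{cor:good_coordinate_behaviour}, which rest on exactly your observation that the $\sim 5^i$ growth of accumulated exploration samples outpaces the $\sim 4^i$ samples needed to certify the shrinking threshold), exact support recovery for all phases $i \geq i(\gamma) \approx \log_2(1/\gamma)$, a crude per-round bound on the burn-in phases giving the $O(25^{i(\gamma)}T_0) = O(T_0/\gamma^{4.65})$ term via $25 \leq 2^{4.65}$, and geometric summation of both the exploration blocks and the support-restricted OFUL regrets to get the $O(\sqrt{T})$ terms. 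The only cosmetic difference is that you argue the coordinate-wise regression guarantee directly from Gram-matrix concentration plus sub-Gaussian noise, whereas the paper imports it as a black-box lemma (its Lemma \ref{lem:reg_bounds}); the content is the same.
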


In order to parse this result, we give the following corollary.
\begin{corollary}
Suppose Algorithm \ref{algo:main_algo_dimensions_unknown} is run with input parameters $\delta \in (0,1)$, and $T_0 = \widetilde{O} \left(d^2\ln^2 \left( \frac{1}{\delta} \right) \right)$ given in Equation (\ref{eqn:T_0_defn}), then with probability at-least $1-\delta$, the regret after $T$ times satisfies
\begin{align*}
        R_T &\leq O ( \frac{d^2}{{\gamma^{4.65}}} \ln^2 ( d/\delta) )  + \widetilde{O} ( d^* \sqrt{ T}).
\end{align*}
\label{cor:dimension_adaptation}
\end{corollary}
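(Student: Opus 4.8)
The plan is to obtain the Corollary as a direct simplification of Theorem~\ref{thm:adaptive_dimension} by substituting the stated scaling $T_0 = \widetilde{O}(d^2 \ln^2(1/\delta))$ and bounding the resulting regret terms. First I would verify that the choice $T_0 = \widetilde{O}(d^2 \ln^2(1/\delta))$ is in fact consistent with the defining inequality~(\ref{eqn:T_0_defn}). The key observation here is that for $N = \lceil \sqrt{T_0}\rceil$ rows sampled uniformly from the unit sphere in $d$ dimensions, the matrix $M_N = \frac{1}{N}\E[A_N^T A_N]$ concentrates around $\frac{1}{d} I$, so its extreme eigenvalues satisfy $\lambda_{\min}^{(N)} = \Theta(1/d)$ and $\lambda_{\max}^{(N)} = \Theta(1/d)$ once $N \gtrsim d$. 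Substituting these scalings into the right-hand side of~(\ref{eqn:T_0_defn}), the dominant term behaves like $\frac{(d+\lambda_{\max})(6\lambda_{\max}+\lambda_{\min})}{\lambda_{\min}^2}\ln(2d/\delta) = \Theta(d^2)\,\ln(d/\delta)$ (after accounting for the $d$ appearing additively), which forces $\sqrt{T_0} = \widetilde{O}(d^2 \ln(1/\delta))$, i.e.\ $T_0 = \widetilde{O}(d^4 \ln^2(1/\delta))$. I would double-check the exponent bookkeeping against the Corollary's claimed $T_0 = \widetilde{O}(d^2\ln^2(1/\delta))$, since this is precisely where the eigenvalue scaling must be pinned down carefully.

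Next I would handle the first term of the Theorem bound, $\frac{50}{\gamma^{4.65}} T_0$. This is immediate: substituting $T_0 = \widetilde{O}(d^2 \ln^2(1/\delta))$ gives the $O\!\left(\frac{d^2}{\gamma^{4.65}}\ln^2(d/\delta)\right)$ term appearing in the Corollary, where the logarithmic factors from $T_0$ and the $\ln^2(1/\delta)$ are absorbed into $\ln^2(d/\delta)$. The only subtlety is confirming that no hidden dependence on $T$ creeps into this term, which is clear since $T_0$ is a fixed input parameter independent of the horizon.

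For the second term, I would simplify $25\sqrt{T}\,[1 + 4\sqrt{d^*\ln(1+\tfrac{25T}{d^*})}\,(1+\sigma\sqrt{2\ln(\tfrac{T}{T_0\delta}) + d^*\ln(1+\tfrac{25T}{d^*})})]$ by collecting the dominant growth in $T$. The product of the two square-root factors yields a leading term of order $\sqrt{d^*}\cdot\sigma\sqrt{d^*\ln(\cdot)\,\ln(\cdot)} = \sigma d^* \,\polylog(T)$, so the whole expression is $\widetilde{O}(d^*\sqrt{T})$, absorbing all $\ln(T)$, $\ln(1/\delta)$, and $\ln(d)$ factors into the $\widetilde{O}$ notation. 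The main thing to watch is that $d^*$ rather than $d$ emerges as the multiplicative constant, which it does because every dimension-dependent factor in the second term is $d^*$ (the effective sparse dimension OFUL runs on after support recovery), not the ambient $d$.

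The main obstacle, and the only genuinely non-routine step, is the eigenvalue estimate for $M_N$ that certifies the $T_0$ scaling in the first paragraph; everything downstream is asymptotic absorption of logarithmic factors into $\widetilde{O}$ and $O(\cdot\ln^2(d/\delta))$. I would therefore invest the bulk of the argument in establishing $\lambda_{\min}^{(N)}, \lambda_{\max}^{(N)} = \Theta(1/d)$ for $N \gtrsim d$, via a standard concentration bound for sums of independent rank-one outer products of sphere-uniform vectors, and treat the two regret-term simplifications as direct substitutions.
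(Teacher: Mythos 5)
Your second and third steps are correct and are exactly the paper's own (implicit) proof of this corollary: the proof of Theorem \ref{thm:adaptive_dimension} ends by plugging the asserted scaling $T_0 = O(d^2\ln^2(1/\delta))$ into the term $\frac{50}{\gamma^{4.65}}T_0$ and absorbing $25\sqrt{T}\,\mathcal{C}(T,\delta,d^*)$ into $\widetilde{O}(d^*\sqrt{T})$; your observation that every dimension factor in the $\sqrt{T}$ term is $d^*$ rather than the ambient $d$ is also right.

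The gap is in your first step, and it is a real one. Your eigenvalue claim is correct --- in fact stronger than you state: since $M_N := \frac{1}{N}\mathbb{E}[A_N^TA_N]$ is an \emph{expectation}, it equals $\frac{1}{d}I$ exactly for every $N \geq 1$ (no concentration argument and no condition $N \gtrsim d$ is needed), so $\lambda_{\min}^{(N)} = \lambda_{\max}^{(N)} = 1/d$. Substituting this into Equation (\ref{eqn:T_0_defn}) as written, both terms of the maximum are $\Theta(\sigma^2 d^2 \ln(d/\delta))$, forcing $\sqrt{T_0} = \Theta(d^2\ln(d/\delta))$, i.e.\ $T_0 = \Theta(d^4\ln^2(d/\delta))$ --- exactly what you computed. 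Having derived this, you cannot then substitute $T_0 = \widetilde{O}(d^2\ln^2(1/\delta))$ into the first term: as written, your argument proves a first term of order $d^4/\gamma^{4.65}$, not the claimed $d^2/\gamma^{4.65}$, and you leave the contradiction explicitly unresolved (``double-check the exponent bookkeeping''). What you have actually uncovered is an inconsistency internal to the paper: Remark \ref{remark:choice_of_TO} asserts the $d^2$ scaling but never checks it against Equation (\ref{eqn:T_0_defn}). The reconciliation is not finer eigenvalue bookkeeping (your proposed ``bulk of the argument'' would only re-confirm $1/d$) but a different least-squares analysis: the coordinate-wise guarantee of Lemma \ref{lem:reg_bounds} shows that $\lceil \sqrt{T_0}\rceil = \Theta\left(\left(\frac{1}{\varepsilon^2}+d\right)\ln(1/\delta)\right) = \Theta(d\ln(1/\delta))$ sphere-uniform samples already yield $\|\widehat{\theta}-\theta^*\|_{\infty} \leq 1/2$ with probability $1-\delta$, which is consistent with $T_0 = \widetilde{O}(d^2\ln^2(1/\delta))$; it is Equation (\ref{eqn:T_0_defn}) --- whose $(\lambda_{\min}^{(N)})^2$ denominator reflects a cruder $\ell_2$-based error bound --- that over-requires samples by a factor of $d$. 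A correct write-up must therefore either take the corollary's stated $T_0$ scaling as a hypothesis and do pure substitution (the paper's route), or verify that scaling via Lemma \ref{lem:reg_bounds} rather than via Equation (\ref{eqn:T_0_defn}); your current hybrid does neither.
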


\begin{remark}
The constants in the Theorem are not optimized. In particular,
the exponent of $\gamma$  can be made arbitrarily close to $4$, by setting $\varepsilon_i = C^{-i}$ in Line $4$ of Algorithm \ref{algo:main_algo_dimensions_unknown}, for some appropriately large constant $C > 1$, and increasing $T_i = (C')^iT_0$, for appropriately large $C'$ ($C'\approx C^4)$.
\end{remark}

\noindent {\textbf{Discussion - }}
The regret of an oracle algorithm that knows the true complexity $d^*$ scales as $\widetilde{O}(d^*\sqrt{T})$ \cite{sparse_bandit1,sparse_bandit2}, matching  {\ttfamily ALB-Dim}'s regret, upto an additive constant independent of time.
{\ttfamily ALB-Dim} is the first algorithm to achieve such model selection guarantees.
On the other hand, standard linear bandit algorithms such as {\ttfamily OFUL} achieve a regret scaling $\widetilde{O}(d\sqrt{T})$, which is much larger compared to that of {\ttfamily ALB-Dim}, especially when $d^* << d$, and $\gamma$ is a constant.
Numerical simulations further confirms this deduction, thereby indicating that our improvements are fundamental and not from mathematical bounds.
 Corollary \ref{cor:dimension_adaptation} also indicates that {\ttfamily ALB-Dim} has higher regret if $\gamma$ is lower. A small value of $\gamma$ makes it harder to distinguish a non-zero coordinate from a zero coordinate, which is reflected in the regret scaling. 
Nevertheless, this only affects the \emph{second order term as a constant}, and the dominant scaling term only depends on the true complexity $d^*$, and not on the underlying dimension $d$.
However, the regret guarantee is not uniform over all $\theta^*$ as it depends on $\gamma$. Obtaining regret rates matching the oracles and that hold uniformly over all $\theta^*$ is an interesting avenue of future work.

\section{Dimension as a Measure of Complexity - Finite Armed Setting}

\subsection{Problem Setup} 
In this section, we consider the model selection problem for the setting with finitely many arms in the framework studied in \cite{foster_model_selection}. 
At each time $t \in [T]$, the forecaster is shown a context $X_t \in \mathcal{X}$, where $\mathcal{X}$ is some arbitrary `feature space'. The set of contexts $(X_t)_{t=1}^T$ are i.i.d. with $X_t \sim \mathcal{D}$, a probability distribution over $\mathcal{X}$ that is known to the forecaster.
Subsequently, the forecaster chooses an action $A_t \in \mathcal{A}$, where the set $\mathcal{A} \coloneqq \{1,\cdots,K\}$ are the $K$ possible actions chosen by the forecaster. The forecaster then receives a reward $Y_t := \langle \theta^*, \phi^M(X_t,A_t) \rangle + \eta_t$. Here $(\eta_t)_{t =1}^T$ is an i.i.d. sequence of $0$ mean sub-gaussian random variables with sub-gaussian parameter $\sigma^2$ that is known to the forecaster.
The function\footnote{Superscript $M$ will become clear shortly} $\phi^M : \mathcal{X} \times \mathcal{A} \rightarrow \mathbb{R}^d$ is a known feature map, and $\theta^* \in \mathbb{R}^d$ is an unknown vector. 
The goal of the forecaster is to minimize its regret, namely $R(T) \coloneqq \sum_{t=1}^T \mathbb{E}\left[ \langle A^*_t - A_t,\theta^* \rangle\right]$, where at any time $t$, conditional on the context $X_t$, $A^*_t \in \argmax_{a \in \mathcal{A}} \langle a,\phi^M(X_t,a) \rangle$. Thus, $A^*_t$ is a random variable as $X_t$ is random.

To describe the model selection, we consider a sequence of $M$ dimensions $1 \leq d_1 < d_2,\cdots < d_M \coloneqq d$ and an associated set of feature maps $(\phi^m)_{m=1}^M$, where for any $m \in [M]$, $\phi^m(\cdot,\cdot) : \mathcal{X} \times \mathcal{A} \rightarrow \mathbb{R}^{d_i}$, is a feature map embedding into $d_i$ dimensions. Moreover, these feature maps are nested, namely, for all $m \in [M-1]$, for all $x \in \mathcal{X}$ and $a \in \mathcal{A}$, the first $d_m$ coordinates of $\phi^{m+1}(x,a)$ equals $\phi^m(x,a)$. The forecaster is assumed to have knowledge of these feature maps.
The unknown vector $\theta^*$ is such that its first $d_{m^*}$ coordinates are non-zero, while the rest are $0$. 
The forecaster does no know the true dimension $d_{m^*}$.
Thus, although, the dimensionality of the problem is $d_{m^*}$, which is unknown to the forecaster. If this were known, than standard contextual bandit algorithms such as LinUCB \cite{chu2011contextual} can guarantee a regret scaling as $\widetilde{O}(\sqrt{d_{m^*}T})$.  In this section, we provide an algorithm in which, even when the forecaster is unaware of $d_{m^*}$, the regret scales as $\widetilde{O}(\sqrt{d_{m^*}T})$. However, this result is non uniform over all $\theta^*$ as, we will show, depends on the minimum non-zero coordinate value in $\theta^*$.
\vspace{2mm}

\noindent\textbf{Model Assumptions} We will require some assumptions identical to the ones stated in \cite{foster_model_selection}. Let $\|\theta^*\|_2 \leq 1$, which is known to the forecaster. The distribution $\mathcal{D}$  is assumed to be known to the forecaster. Associated with  the distribution $\mathcal{D}$
 is a matrix $\Sigma_M \coloneqq \frac{1}{K} \sum_{a \in \mathcal{A}} \mathbb{E} \left[ \phi^M(x,a) \phi^M(x,a)^T\right]$ (where $x \sim \mathcal{D}$), where we assume its minimum eigen value $\lambda_{min}(\Sigma_M)  > 0$ is strictly positive.
Further, we assume that, for all $a \in \mathcal{A}$, the random variable $\phi^M(x,a)$ (where $x \sim \mathcal{D}$ is random) is a sub-gaussian random variable with (known) parameter $\tau^2$.

\subsection{{\ttfamily ALB-Dim} Algorithm}

The algorithm in this case is identical to that of Algorithm \ref{algo:main_algo_dimensions_unknown}, except with the difference that in place of OFUL, we use {\ttfamily SupLinRel} of \cite{linRel} as the black-box. 
The full details of the Algorithm are provided in Appendix \ref{appendix-comparision}.

\subsection{Main Result}

For brevity, we only state the Corollary of our main Theorem (Theorem \ref{thm:adaptive_dimension_foster}) which is stated in Appendix \ref{appendix-comparision}.

\begin{corollary}
Suppose Algorithm \ref{algo:main_algo_dimensions_foster} is run with input parameters $\delta \in (0,1)$, and $T_0 = \widetilde{O} \left(d^2\ln^2 \left( \frac{1}{\delta} \right) \right)$ given in Equation (\ref{eqn:T_0_defn_foster}) , then with probability at-least $1-\delta$, the regret after $T$ times satisfies
\begin{align*}
        R_T &\leq O \left( \frac{d^2}{{\gamma^{4.65}}} \ln^2 ( d/\delta) \tau^2 \ln \left( \frac{TK}{\delta}\right) \right)  + \widetilde{O} (  \sqrt{ T d^*_m}),
\end{align*}
where $\gamma = \min \{|\theta^*_i| : \theta^*_i \neq 0 \}$ and $d^*$ the sparsity of $\theta^*$.
\label{cor:dimension_adaptation_foster}
\end{corollary}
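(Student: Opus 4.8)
The plan is to establish the finite-armed result by mirroring the analysis of the continuum-armed case (Theorem~\ref{thm:adaptive_dimension} and Corollary~\ref{cor:dimension_adaptation}), isolating the only two places where the finite-armed model with context distribution $\mathcal{D}$ enters: the concentration of the least-squares estimate built from the random-exploration samples, and the per-phase regret of the black-box, which is now {\ttfamily SupLinRel} \cite{linRel} in place of {\ttfamily OFUL}. I would first prove the master bound (Theorem~\ref{thm:adaptive_dimension_foster}) in terms of $T_0$, $\gamma$, $d^*_m$, $\tau$ and $K$, and then deduce Corollary~\ref{cor:dimension_adaptation_foster} by substituting the value of $T_0$ from Equation~(\ref{eqn:T_0_defn_foster}).

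\textbf{Support recovery.} The exploration samples are now $\phi^M(X_t,A_t)$ with $X_t \sim \mathcal{D}$ and $A_t$ uniform on $[K]$, so the exploration design $\tfrac{1}{S_i}\boldsymbol{\alpha}_i^\top\boldsymbol{\alpha}_i$ has population mean $\Sigma_M$. I would first invoke a matrix-Bernstein bound, using the sub-gaussian parameter $\tau^2$ of the features and the assumption $\lambda_{\min}(\Sigma_M)>0$, to show that once $S_i$ exceeds a threshold of order $\tfrac{\tau^2}{\lambda_{\min}(\Sigma_M)^2}\ln(dK/\delta_i)$ the empirical covariance is invertible and well-conditioned; a sub-gaussian tail bound on the noise term $\boldsymbol{\alpha}_i^\top\eta$ then gives $\|\widehat{\theta}_{i+1}-\theta^*\|_\infty \lesssim \tfrac{\sigma}{\lambda_{\min}(\Sigma_M)}\sqrt{\tfrac{\ln(dK/\delta_i)}{S_i}}$. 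Since $S_i \asymp 5^i\sqrt{T_0}$ grows faster than the threshold $\varepsilon_{i+1}/2 = 2^{-(i+2)}$ decays, there is a critical phase $i^\star$ with $2^{-i^\star}\asymp\gamma$ after which every coordinate of magnitude $\geq\gamma$ is retained and every zero coordinate is discarded, i.e. $\mathcal{D}_i=\mathrm{supp}(\theta^*)$ for all $i\geq i^\star$, on an event of probability $\geq 1-\sum_i\delta_i\geq 1-\delta$. The union bound over the $K$ arms and $d$ coordinates is exactly what injects the $\tau^2\ln(TK/\delta)$ factor that is absent from the continuum case.

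\textbf{Regret decomposition.} I would split $R_T$ into three parts. First, all phases before $i^\star$ contribute at most their total length $O(25^{i^\star}T_0)=O(T_0/\gamma^{4.65})$, using $25^{i^\star}=(2^{i^\star})^{\log_2 25}\asymp\gamma^{-\log_2 25}$ with $\log_2 25<4.65$; this is the time-independent term. Second, in every phase $i\geq i^\star$ the support is correct, so {\ttfamily SupLinRel} runs on exactly $d^*_m$ coordinates and by its guarantee incurs regret $\widetilde{O}(\sqrt{d^*_m\cdot 25^iT_0})$ at slack $\delta_i$; summing this geometric series over the phases covering $[1,T]$ yields the dominant $\widetilde{O}(\sqrt{d^*_m T})$ term. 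Third, the random-exploration steps cost $O(1)$ each and total $\sum_i 5^i\sqrt{T_0}=O(\sqrt{T})$, which is absorbed into the second-order term. Combining the failure probabilities $\delta_i=\delta/2^i$ over phases keeps the whole event at confidence $1-\delta$, and substituting $T_0=\widetilde{O}(d^2\ln^2(1/\delta))$ produces the stated constant $O\!\big(\tfrac{d^2}{\gamma^{4.65}}\ln^2(d/\delta)\,\tau^2\ln(TK/\delta)\big)$.

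\textbf{Main obstacle.} The crux is the $\ell_\infty$ concentration of $\widehat{\theta}_{i+1}$ with the correct dependence on $\tau$, $\lambda_{\min}(\Sigma_M)$ and $K$: unlike the continuum case, where arms are drawn from the unit sphere and the design has an explicit, well-conditioned mean $M_N$, here the design is driven by the merely sub-gaussian feature map $\phi^M(X_t,A_t)$, so I must simultaneously control the smallest eigenvalue of the empirical covariance and the heavier feature tails uniformly over the $K$ arms. Calibrating the matrix-Bernstein threshold and the resulting $i^\star$ against the phase schedule — so that support recovery is guaranteed before the geometric regret series becomes dominant — is the delicate quantitative step; the remainder is a faithful transcription of the continuum-armed argument, with {\ttfamily SupLinRel}'s $\sqrt{d^*_m T}$ rate replacing {\ttfamily OFUL}'s $d^*\sqrt{T}$ rate.
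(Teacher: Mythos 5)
Your proposal follows essentially the same route as the paper: establish $\ell_\infty$-concentration of the exploration-based least-squares estimate so that the support (and hence the model index $\mathcal{M}_i$) is exactly recovered after a critical phase $i(\gamma) \asymp \log_2(1/\gamma)$, then decompose the regret into the pre-$i(\gamma)$ phases (bounded by their total length $O(25^{i(\gamma)}T_0) = O(T_0/\gamma^{4.65})$ via $\log_2 25 < 4.65$), the geometric sum of per-phase {\ttfamily SupLinRel} regrets $\widetilde{O}(\sqrt{d_{m^*}\,25^i T_0})$, and the $O(\sqrt{T})$ exploration cost. The only cosmetic difference is that the paper handles the sub-gaussian features by conditioning on the event $\sup_{t \leq T, a \in \mathcal{A}}\|\phi^M(x_t,a)\| \leq b(\delta)$ with $b(\delta) = O(\tau\sqrt{\log(TK/\delta)})$ (this truncation, a union bound over the $T$ rounds and $K$ arms rather than over arms and coordinates, is what injects the $\tau^2\ln(TK/\delta)$ factor into $T_0$) and then reuses the continuum-case regression lemma, whereas you invoke a sub-gaussian matrix-Bernstein bound directly; these amount to the same calculation.
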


\noindent {\textbf{Discussion - }}
Our regret scaling (in time) matches that of an oracle that knows the true problem complexity and thus obtains a regret scaling of $\widetilde{O}(\sqrt{d_{m^*}T})$. This, thus improves on the rate compared to that obtained in \cite{foster_model_selection}, whose regret scaling is sub-optimal compared to the oracle. On the other hand however, our regret bound depends on $\gamma$ and is thus not uniform over all $\theta^*$, unlike the bound in \cite{foster_model_selection} that is uniform over $\theta^*$. Thus, in general, our results are not directly comparable to that of \cite{foster_model_selection}. It is an interesting future work to close the gap and in particular, obtain the regret matching that of an oracle to hold uniformly over all $\theta^*$.

\section{Simulations}
\label{sec:simulations}

\begin{figure*}[t!]
    \centering
    \subfigure[$\|\theta^*\| = 0.1, \,\, b_1 =10$]{\includegraphics[height = 3.65cm,width=3.3cm]{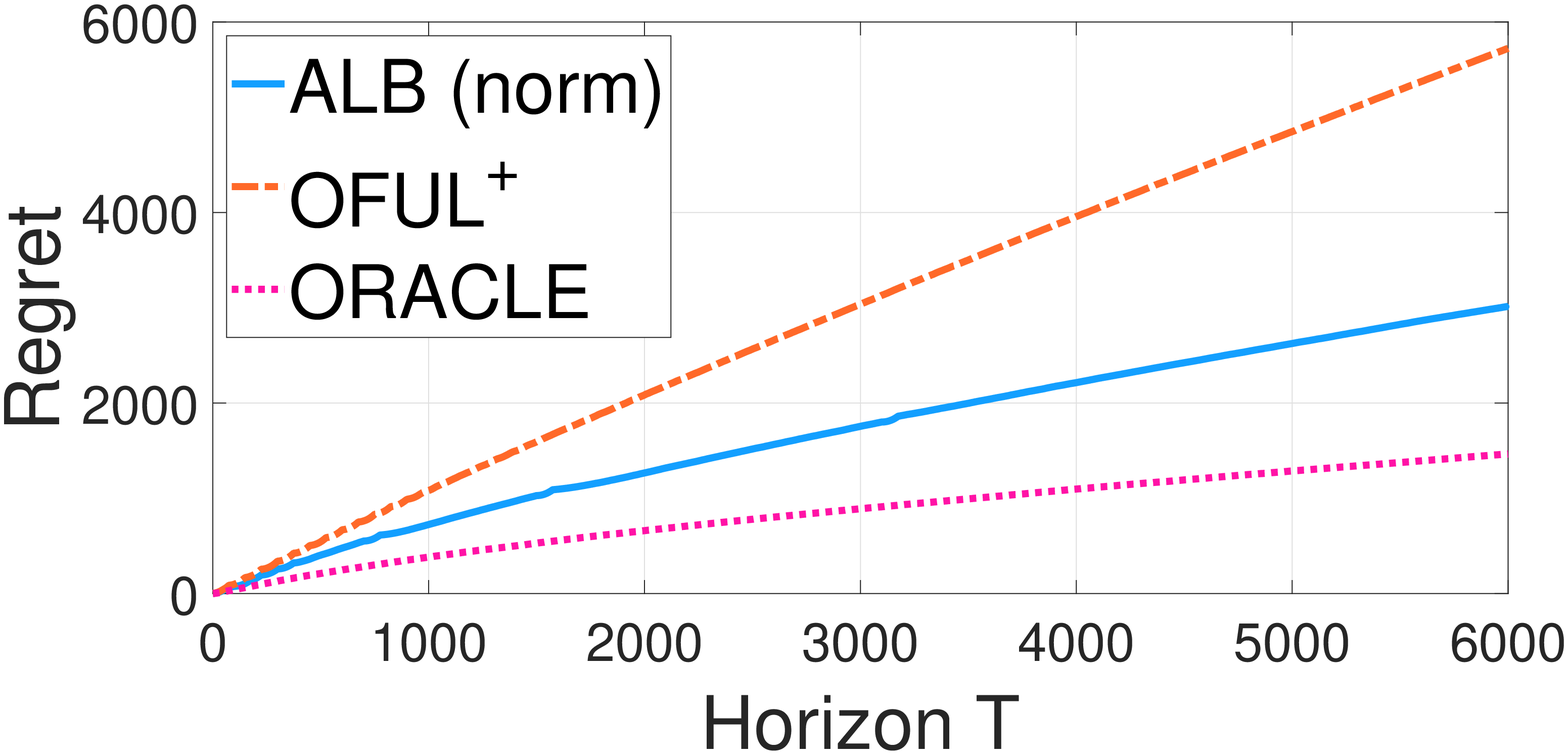} }
    \subfigure[$\|\theta^*\| = 1, \,\, b_1 =10$]{\includegraphics[height = 3.65cm,width=3.3cm]{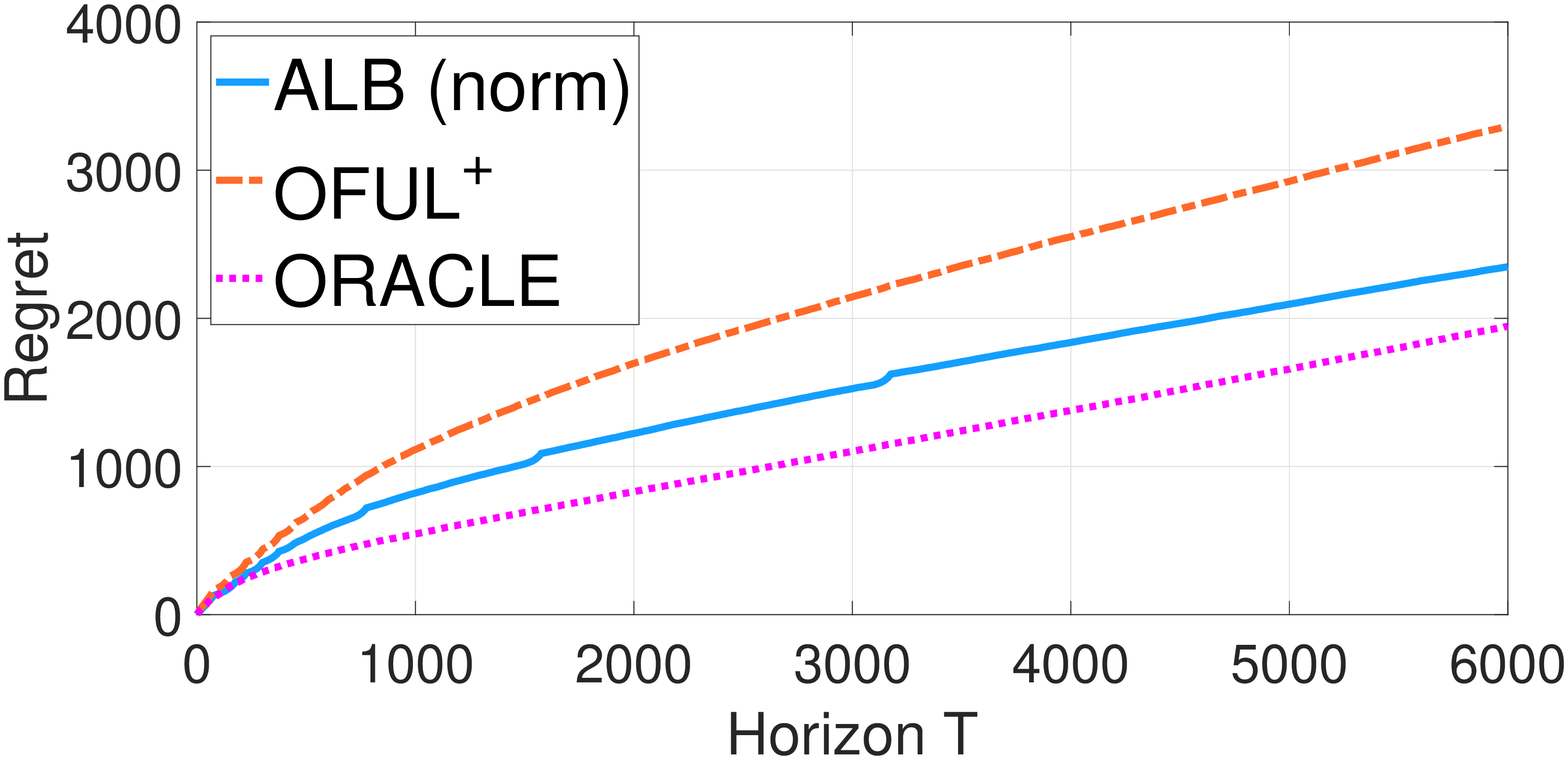} }
    \subfigure[Estimates of $\|\theta^*\|$ ]{{\includegraphics[height = 3.65cm,width=3.3cm]{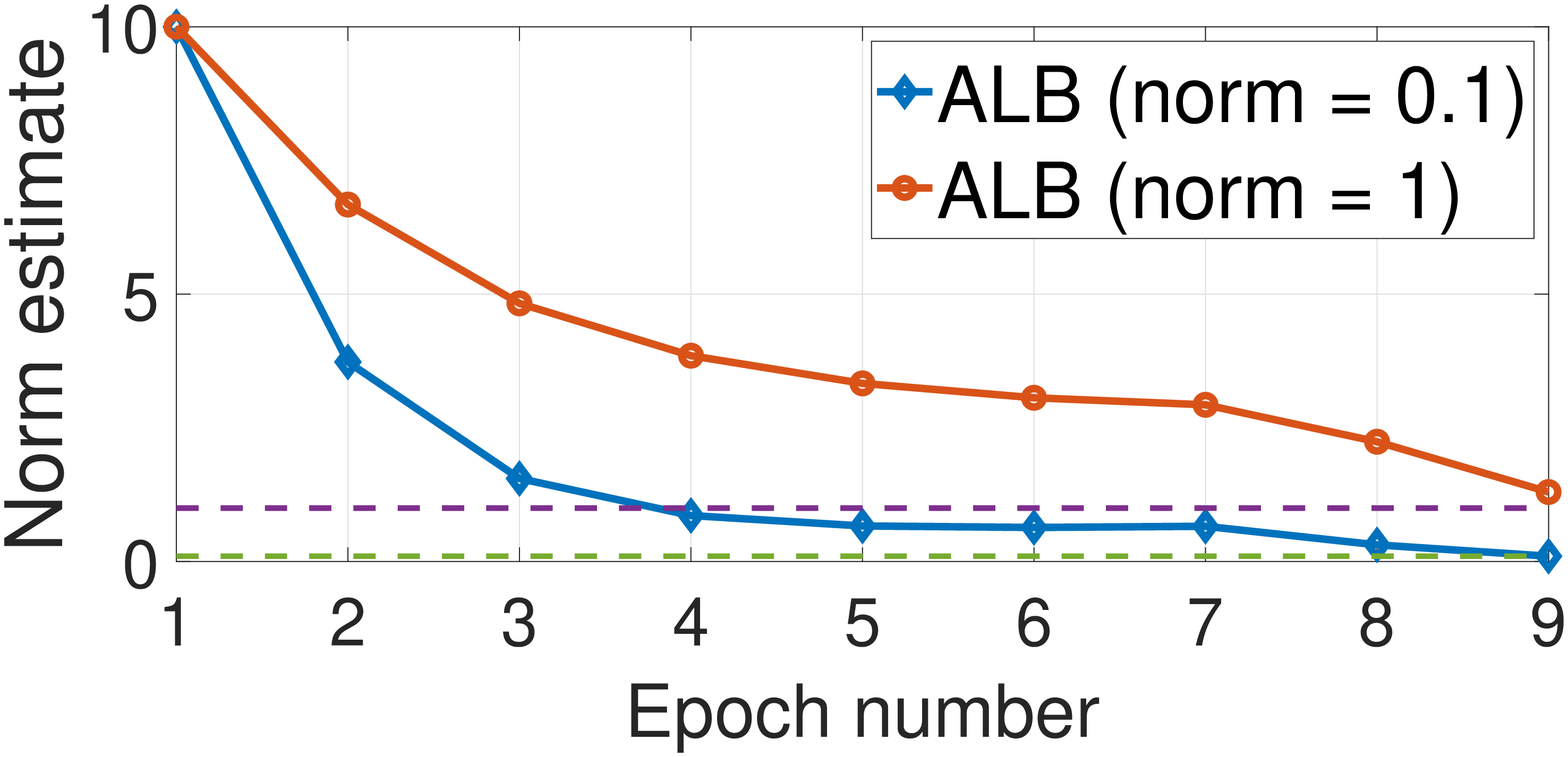} }}
    \subfigure[$d^*=20$, $d=500$ ]{{\includegraphics[height = 3.65cm,width=3.3cm]{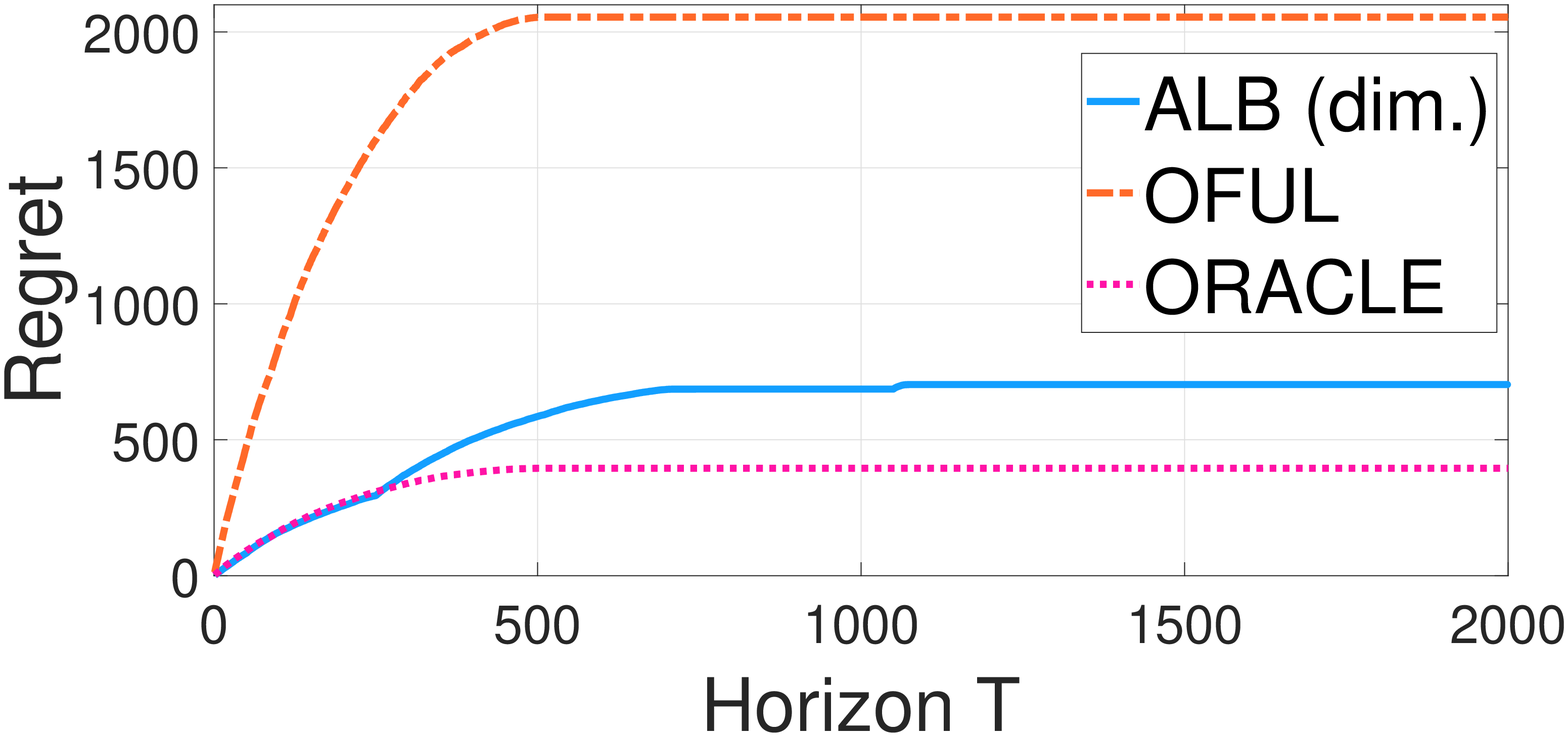} }}
    \subfigure[$d^*=20$, $d=200$ ]{{\includegraphics[height = 3.65cm,width=3.3cm]{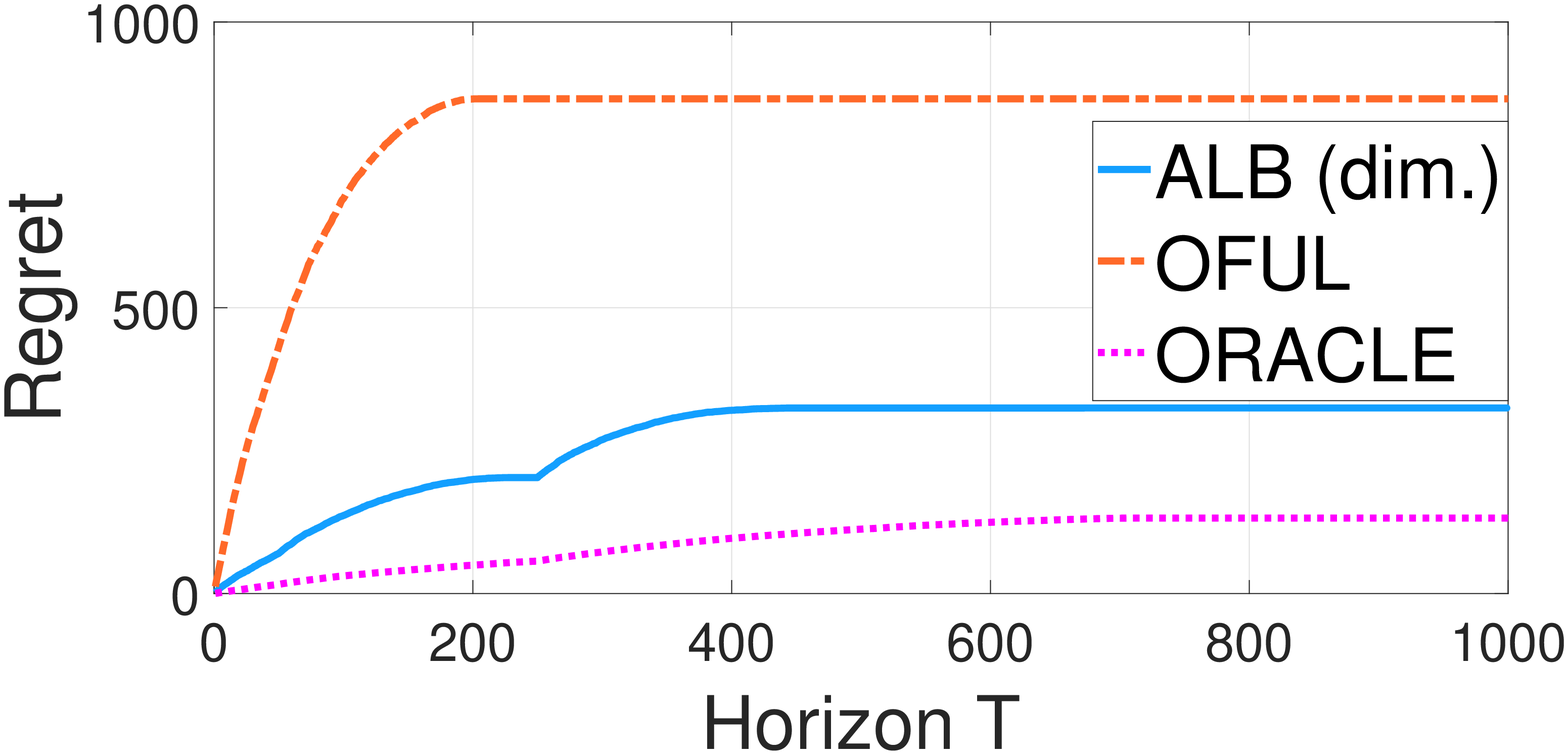} }}
    \subfigure[Dimension refinement ]{{\includegraphics[height = 3.65cm,width=3.3cm]{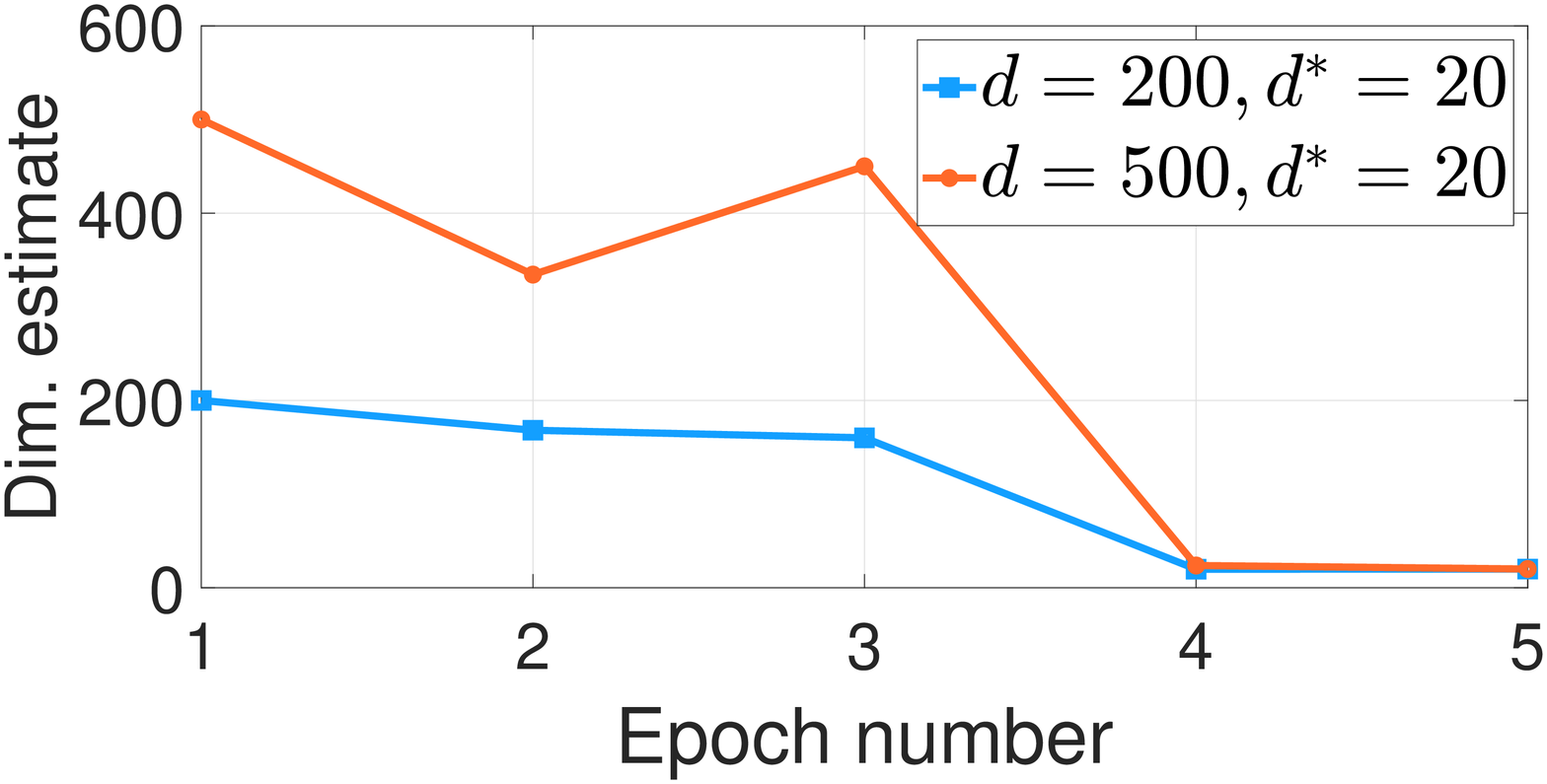} }}
    \subfigure[Yahoo; $ b_1 = 25$]{\includegraphics[height = 3.65cm,width=3.3cm]{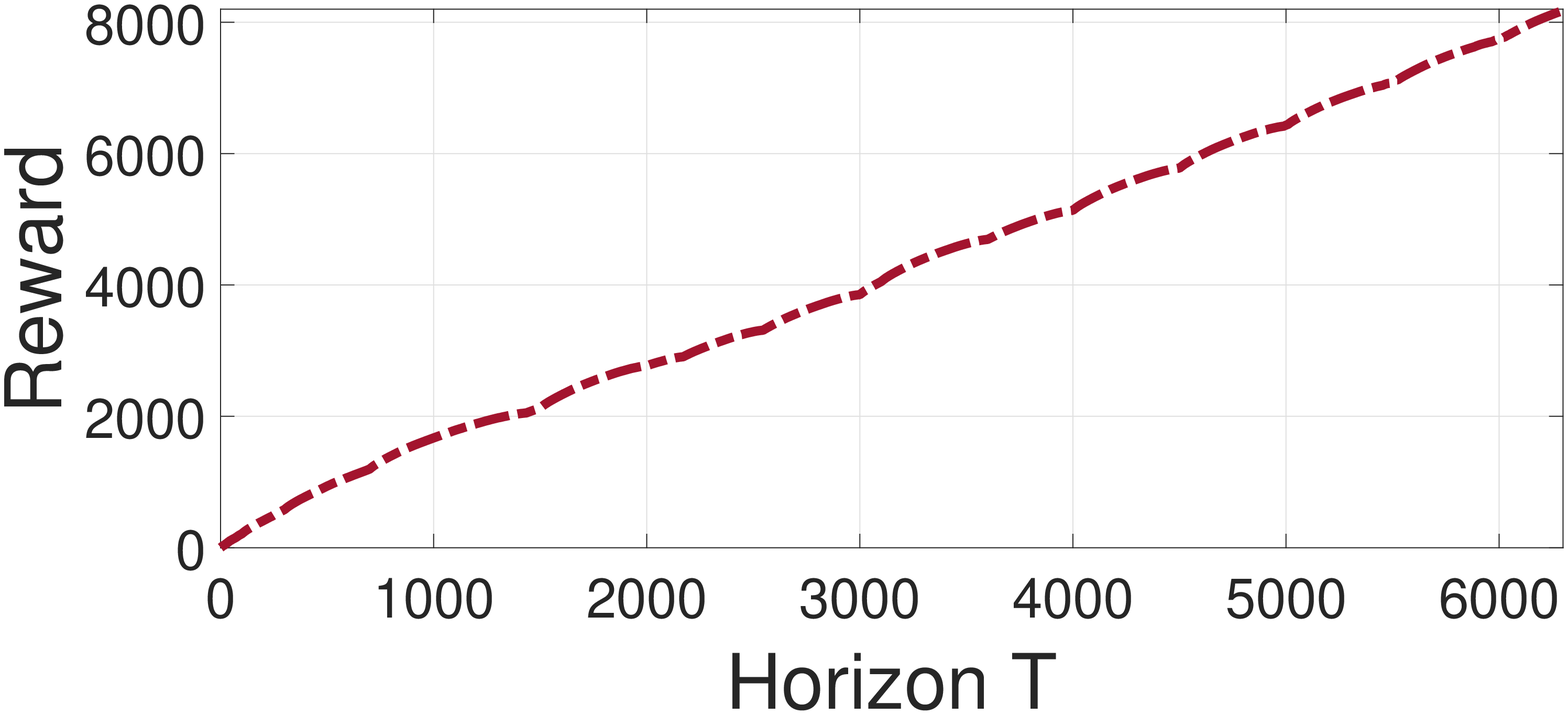} }
    \subfigure[Yahoo; $ b_1 = 25 $]{\includegraphics[height = 3.65cm,width=3.3cm]{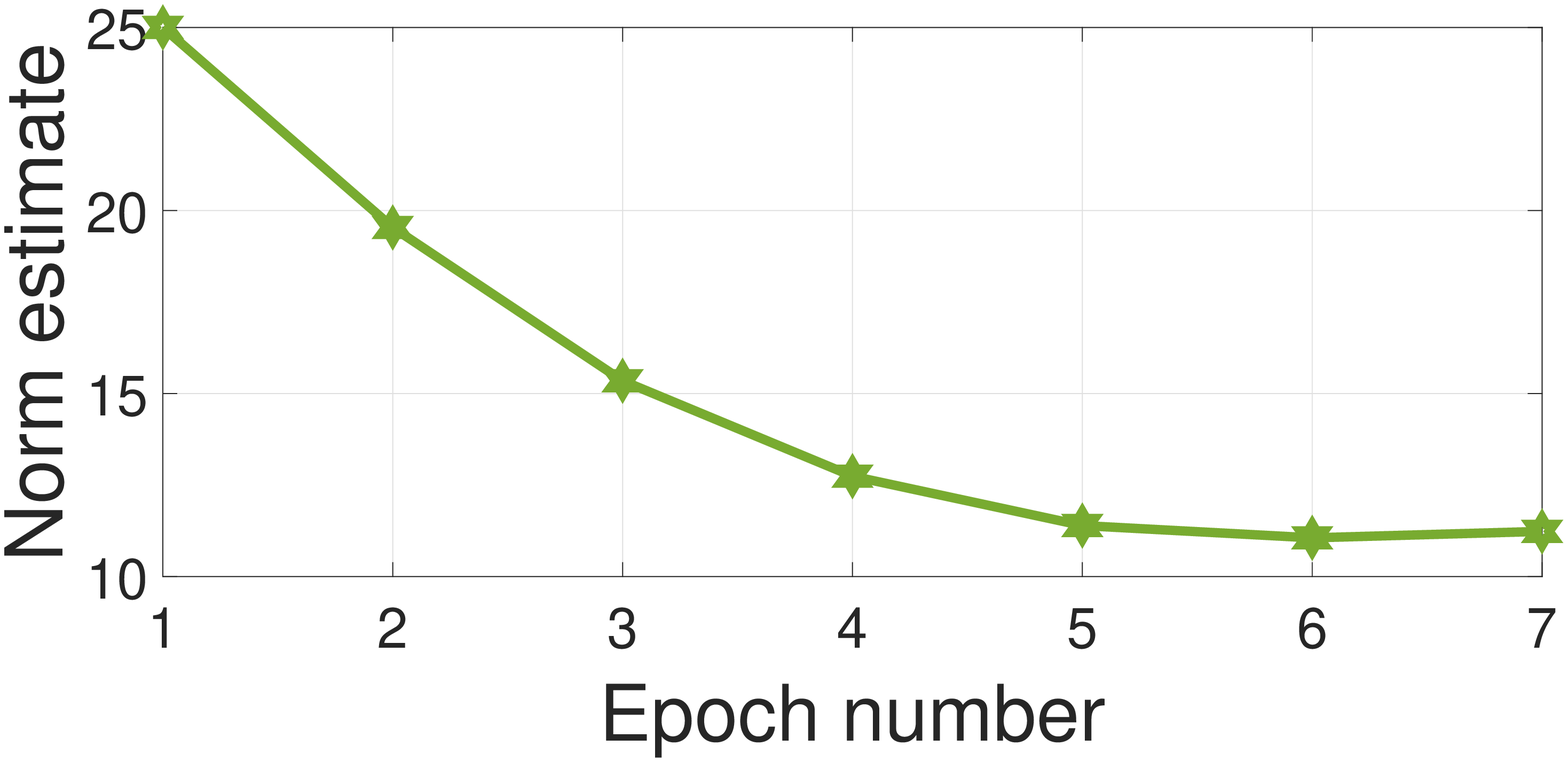} }
    \caption{Synthetic and real-data experiments, validating the effectiveness of Algorithm~\ref{algo:main_algo} and \ref{algo:main_algo_dimensions_unknown}. All the results are averaged over $25$ trials. }%
    \label{fig:experiments}
\end{figure*}
\subsection{Synthetic Experiments}

We compare {\ttfamily ALB-Norm}
with the (non-adaptive) OFUL$^+$ and an \emph{oracle} that knows the problem complexity apriori. The oracle just runs OFUL$^+$ with the known problem complexity. We choose the bias $\sim \mathcal{U}[-1,1]$, and the additive noise to be zero-mean Gaussian random variable with variance $0.5$. At each round of the learning algorithm, we sample the context vectors from a $d$-dimensional standard Gaussian, $\mathcal{N}(0,I_d)$. We select $d = 50$, the number of arms, $K=75$, and the initial epoch length as $100$. In particular, we generate the true $\theta^*$ in $2$ different ways: (i) $\|\theta^*\| = 0.1$, but the initial estimate $b_1 =10$, and (ii) $\|\theta^*\| = 1$, with the initial estimate $b_1 =10$.

In panel (a) and (b) of Figure~\ref{fig:experiments}, , we observe that, in setting (i), OFUL$^+$ performs poorly owing to the gap between $\|\theta^*\|$ and $b_1$. On the other hand, {\ttfamily ALB-Norm} is sandwiched between the OFUL$^+$ and the oracle. Similar things happen in setting (ii). In panel (c), we show that the norm estimates of {\ttfamily ALB-Norm} improves over epochs, and converges to the true norm very quickly.

In panel (d)-(f), we compare the performance of {\ttfamily ALB-Dim} with the OFUL (\cite{oful}) algorithm and an \emph{oracle} who knows the true support of $\theta^*$ apriori. 
For computational ease, we set $\varepsilon_i = 2^{-i}$ in simulations.
We select $\theta^*$ to be $d^* =20$-sparse, with the smallest non-zero component, $\gamma = 0.12$. We have $2$ settings: (i) $d = 500$ and (ii) $d = 200$. In panel (d) and (e), we observe a huge gap in cumulative regret between {\ttfamily ALB-Dim} and OFUL, thus showing the effectiveness of dimension adaptation. In panel (f), we plot the successive dimension refinement over epochs. We observe that within $4-5$ epochs, {\ttfamily ALB-Dim} finds the sparsity of $\theta^*$.

\subsection{Real-data experiment}

Here, we evaluate the performance of {\ttfamily ALB-Norm} on Yahoo!     `Learning to Rank Challenge' dataset (\cite{yahoo}). In particular, we use the file \texttt{set2.test.txt}, which consists of $103174$ rows and $702$ columns. The first column denotes the rating, $\{0,1,.,4\}$ given by the user (which is taken as reward); the second column denotes the user id, and the rest $700$ columns denote the context of the user. After selecting $20,000$ rows and $50$ columns at random (several other random selections yield similar results), we cluster the data by running $k$ means algorithm with $k=500$. We treat each cluster as a bandit arm with mean reward as the empirical mean of the individual rating in the cluster, and the context as the centroid of the cluster. This way, we obtain a bandit setting with $K = 500$ and $d =50$. 

Assuming (reward, context) coming from a linear  model (with bias, see Section~\ref{sec:setup}), we use {\ttfamily ALB-Norm} to estimate the bias and $\theta^*$ simultaneously. In panel (g), we plot the cumulative reward accumulated over time. We observe that the reward is accumulated over time in an almost linear fashion. We also plot the norm estimate, $\|\theta^*\|$ over epochs in panel (h), starting with an initial estimate of $25$. We observe that within $6$ epochs the estimate stabilizes to a value of $11.1$. This shows that {\ttfamily ALB-Norm} adapts to the actual $\|\theta^*\|$.

\section{Conclusion}
In this paper, we considered refined model selection for linear bandits, by defining new notions of complexity.
We gave two novel algorithms {\ttfamily ALB-Norm} and {\ttfamily ALB-Dim} that successively refines the hypothesis class and achieves model selection guarantees; regret scaling in the complexity of the smallest class containing the true model. This is the first such algorithm to achieve regret scaling similar to an oracle that knew the problem complexity. 
 {\color{black} An interesting direction of future work is to derive regret bounds for the case when the dimension is a measure of complexity, that hold uniformly over all $\theta^*$, i.e., have no explicit dependence on $\gamma$}.

\section{Acknowledgements} 
The authors would like to acknowledge Akshay Krishnamurthy, Dylan Foster and Haipeng Luo for insightful comments and suggestions.

\bibliographystyle{alpha}
\bibliography{ref-adaptive-bandits}

\newcommand{\etalchar}[1]{$^{#1}$}
\begin{thebibliography}{ACBF02}

\bibitem[AB10]{successive-rejects}
Jean-Yves Audibert and S{\'e}bastien Bubeck.
\newblock Best arm identification in multi-armed bandits.
\newblock 2010.

\bibitem[AB{\etalchar{+}}11]{arlot2011margin}
Sylvain Arlot, Peter~L Bartlett, et~al.
\newblock Margin-adaptive model selection in statistical learning.
\newblock {\em Bernoulli}, 17(2):687--713, 2011.

\bibitem[ACBF02]{auer2002finite}
Peter Auer, Nicolo Cesa-Bianchi, and Paul Fischer.
\newblock Finite-time analysis of the multiarmed bandit problem.
\newblock {\em Machine learning}, 47(2-3):235--256, 2002.

\bibitem[AGO18]{temporal}
Peter Auer, Pratik Gajane, and Ronald Ortner.
\newblock Adaptively tracking the best arm with an unknown number of
  distribution changes.
\newblock In {\em European Workshop on Reinforcement Learning}, volume~14, page
  375, 2018.

\bibitem[ALNS16]{coral}
Alekh Agarwal, Haipeng Luo, Behnam Neyshabur, and Robert~E Schapire.
\newblock Corralling a band of bandit algorithms.
\newblock {\em arXiv preprint arXiv:1612.06246}, 2016.

\bibitem[Aue02]{linRel}
Peter Auer.
\newblock Using confidence bounds for exploitation-exploration trade-offs.
\newblock {\em Journal of Machine Learning Research}, 3(Nov):397--422, 2002.

\bibitem[AYPS11]{oful}
Yasin Abbasi-Yadkori, D{\'a}vid P{\'a}l, and Csaba Szepesv{\'a}ri.
\newblock Improved algorithms for linear stochastic bandits.
\newblock In {\em Advances in Neural Information Processing Systems}, pages
  2312--2320, 2011.

\bibitem[BB20]{sparse_bandit2}
Hamsa Bastani and Mohsen Bayati.
\newblock Online decision making with high-dimensional covariates.
\newblock {\em Operations Research}, 68(1):276--294, 2020.

\bibitem[BM{\etalchar{+}}98]{massart}
Lucien Birg{\'e}, Pascal Massart, et~al.
\newblock Minimum contrast estimators on sieves: exponential bounds and rates
  of convergence.
\newblock {\em Bernoulli}, 4(3):329--375, 1998.

\bibitem[CB17]{online_mod_sel4}
Ashok Cutkosky and Kwabena Boahen.
\newblock Online learning without prior information.
\newblock {\em arXiv preprint arXiv:1703.02629}, 2017.

\bibitem[CC10]{yahoo}
Olivier Chapelle and Yi~Chang.
\newblock Yahoo! learning to rank challenge overview.
\newblock In {\em Proceedings of the 2010 International Conference on Yahoo!
  Learning to Rank Challenge - Volume 14}, YLRC’10, page 1–24. JMLR.org,
  2010.

\bibitem[Che02]{cherkassky2002model}
Vladimir Cherkassky.
\newblock Model complexity control and statistical learning theory.
\newblock {\em Natural computing}, 1(1):109--133, 2002.

\bibitem[CLRS11]{chu2011contextual}
Wei Chu, Lihong Li, Lev Reyzin, and Robert Schapire.
\newblock Contextual bandits with linear payoff functions.
\newblock In {\em Proceedings of the Fourteenth International Conference on
  Artificial Intelligence and Statistics}, pages 208--214, 2011.

\bibitem[CM12]{sparse_bandit1}
Alexandra Carpentier and R{\'e}mi Munos.
\newblock Bandit theory meets compressed sensing for high dimensional
  stochastic linear bandit.
\newblock In {\em Artificial Intelligence and Statistics}, pages 190--198,
  2012.

\bibitem[CMB19]{osom}
Niladri~S Chatterji, Vidya Muthukumar, and Peter~L Bartlett.
\newblock Osom: A simultaneously optimal algorithm for multi-armed and linear
  contextual bandits.
\newblock {\em arXiv preprint arXiv:1905.10040}, 2019.

\bibitem[DGL13]{devroye_book}
Luc Devroye, L{\'a}szl{\'o} Gy{\"o}rfi, and G{\'a}bor Lugosi.
\newblock {\em A probabilistic theory of pattern recognition}, volume~31.
\newblock Springer Science \& Business Media, 2013.

\bibitem[DHK08]{dani2008stochastic}
Varsha Dani, Thomas~P Hayes, and Sham~M Kakade.
\newblock Stochastic linear optimization under bandit feedback.
\newblock 2008.

\bibitem[FKL19]{foster_model_selection}
Dylan~J Foster, Akshay Krishnamurthy, and Haipeng Luo.
\newblock Model selection for contextual bandits.
\newblock In {\em Advances in Neural Information Processing Systems}, pages
  14714--14725, 2019.

\bibitem[GCG17]{ghosh2017misspecified}
Avishek Ghosh, Sayak~Ray Chowdhury, and Aditya Gopalan.
\newblock Misspecified linear bandits.
\newblock In {\em Thirty-First AAAI Conference on Artificial Intelligence},
  2017.

\bibitem[KL18]{linear_reg_guarantees}
Michael Krikheli and Amir Leshem.
\newblock Finite sample performance of linear least squares estimators under
  sub-gaussian martingale difference noise.
\newblock In {\em 2018 IEEE International Conference on Acoustics, Speech and
  Signal Processing (ICASSP)}, pages 4444--4448. IEEE, 2018.

\bibitem[KWS18]{krishnamurthy2}
Akshay Krishnamurthy, Zhiwei~Steven Wu, and Vasilis Syrgkanis.
\newblock Semiparametric contextual bandits.
\newblock {\em arXiv preprint arXiv:1803.04204}, 2018.

\bibitem[LC18]{locatelli}
Andrea Locatelli and Alexandra Carpentier.
\newblock Adaptivity to smoothness in x-armed bandits.
\newblock In {\em Conference on Learning Theory}, pages 1463--1492, 2018.

\bibitem[LN{\etalchar{+}}99]{lugosi_adaptive}
G{\'a}bor Lugosi, Andrew~B Nobel, et~al.
\newblock Adaptive model selection using empirical complexities.
\newblock {\em The Annals of Statistics}, 27(6):1830--1864, 1999.

\bibitem[LR85]{lai-robbins}
Tze~Leung Lai and Herbert Robbins.
\newblock Asymptotically efficient adaptive allocation rules.
\newblock {\em Advances in applied mathematics}, 6(1):4--22, 1985.

\bibitem[LS15]{online_mod_sel1}
Haipeng Luo and Robert~E Schapire.
\newblock Achieving all with no parameters: Adanormalhedge.
\newblock In {\em Conference on Learning Theory}, pages 1286--1304, 2015.

\bibitem[LST17]{easy_data2}
Thodoris Lykouris, Karthik Sridharan, and {\'E}va Tardos.
\newblock Small-loss bounds for online learning with partial information.
\newblock {\em arXiv preprint arXiv:1711.03639}, 2017.

\bibitem[MA13]{online_mod_sel2}
Brendan McMahan and Jacob Abernethy.
\newblock Minimax optimal algorithms for unconstrained linear optimization.
\newblock In {\em Advances in Neural Information Processing Systems}, pages
  2724--2732, 2013.

\bibitem[Ora14]{online_mod_sel3}
Francesco Orabona.
\newblock Simultaneous model selection and optimization through parameter-free
  stochastic learning.
\newblock In {\em Advances in Neural Information Processing Systems}, pages
  1116--1124, 2014.

\bibitem[Vap06]{vapnik_book}
Vladimir Vapnik.
\newblock {\em Estimation of dependences based on empirical data}.
\newblock Springer Science \& Business Media, 2006.

\bibitem[Wai19]{wainwright2019high}
Martin~J Wainwright.
\newblock {\em High-dimensional statistics: A non-asymptotic viewpoint},
  volume~48.
\newblock Cambridge University Press, 2019.

\end{thebibliography}

\appendix

\section*{Appendix}

\section{Detailed Description of OFUL$^+$}
\label{app:oful}
We now discuss the algorithm OFUL$^+$. A variation of this was proposed in \cite{osom} in the context of model selection between linear and standard multi-armed bandits. As seen in the OFUL$^+$ sub-routine of Algorithm~\ref{algo:main_algo}, we use $\Tilde{\mu}_{i,t}$ to address the bias term in the observation, which we define shortly. The parameters $b$ and $\delta$ appears in the construction of the confidence set and the regret guarantee. Furthermore, assume that the algorithm OFUL$^+$ is run for $\Tilde{T}$ rounds.

Let $A_s$ be the arm index played at time instant $s$ and $T_i(t)$ be the number of times we play arm $i$ until time $t$. Hence $T_i(t) = \sum_{s=1}^t \ind{A_s = i}$. Also, let $b$ be the current estimate of $\|\theta^*\|$. Also define,
\begin{align*}
    \Bar{g}_{i,t} = \frac{1}{T_i(t)}\sum_{s=1}^t g_{i,s} \ind{A_s =t}.
\end{align*}
With this, we have
\begin{align}
    \Tilde{\mu}_{i,t} = \Bar{g}_{i,t} + \sigma \left [ \frac{1+T_i(t)}{T_i^2(t)} \left( 1+2\log \left(\frac{K(1+T_i(t))^{1/2}}{\delta} \right) \right)  \right]^{1/2} + b \, \sqrt{\frac{2d}{T_i(t)}\log \left(\frac{1}{\delta}\right)}
    \label{eqn:mu_tilde}
\end{align}

In order to specify the confidence interval $\mathcal{C}_t$, we first talk about the least squares estimate $\hat{\theta}$ first. Using the notation of \cite{osom}, we define
\begin{align*}
    \hat{\theta}_t = \left( \mathbf{\alpha}^\top_{K+1:t} \mathbf{\alpha}_{K+1:t} + I \right)^{-1} \mathbf{\alpha}^\top_{K+1:t} G_{K+1:t}
\end{align*}
where $\alpha_{K+1:t}$ is a matrix with rows $\alpha_{A_{K+1},K+1}^\top, \ldots, \alpha_{A_t,t}^\top$ and $G_{K+1:t} = [g_{A_{K+1},K+1} - \Tilde{\mu}_{A_{K+1},K+1}, \ldots, g_{A_t,t} - \Tilde{\mu}_{A_t,t}]^\top$. With this, the confidence interval is defined as
\begin{align}
    \mathcal{C}_t = \left \lbrace \theta \in \real^d: \|\theta - \hat{\theta}_t\| \leq \mathcal{K}_\delta(b,t,\Tilde{T})  \right \rbrace,
    \label{eqn:conf_int}
\end{align}
and Lemma 2 of \cite{osom} shows that $\theta^* \in \mathcal{C}_t$ with probability at least $1-4\delta$.

We now define  the quantity $\mathcal{K}_\delta(b,t,\Tilde{T})$. Note that we track the dependence on the complexity parameter $\|\theta^*\|$. We have
\begin{align}
    & T_{\min}(\delta,\Tilde{T}) = \left( \frac{16}{\rho^2_{\min}} + \frac{8}{3\rho_{\min}} \right) \log \left( \frac{2d\Tilde{T}}{\delta} \right), \nonumber \\ & \mathcal{M}_\delta(b,t) = b + \sqrt{2\sigma^2 \left(\frac{d}{2}\log \left(1+\frac{t}{d}\right) + \log\left(\frac{1}{\delta}\right) \right)},  \label{eqn:tmin}\\
    & \Upsilon_\delta(b,t,\Tilde{T}) = \frac{10}{3}\left( b +2 + \sigma\sqrt{1+2\log \left(\frac{2K\Tilde{T}}{\delta} \right)}\right) \nonumber \\
    & \times \left[ \log \left(\frac{2K\Tilde{T}}{\delta} \right) + \sqrt{t \log \left(\frac{2K\Tilde{T}}{\delta} \right)+  \log^2 \left(\frac{2K\Tilde{T}}{\delta} \right)} \right], \label{eqn:upsilon}\\
    & \mathcal{K}_\delta(b,t,\Tilde{T}) = \begin{cases} \mathcal{M}_\delta(b,t) + \Upsilon_\delta(b,t,\Tilde{T}) &\mbox{if } 1 < t < T_{\min}, \\
    \frac{\mathcal{M}_\delta(b,t)}{\sqrt{1+\rho_{\min} \, t/2}} + \frac{\Upsilon_\delta(b,t,\Tilde{T})}{1+\rho_{\min}\, t/2} & \mbox{if } t >  T_{\min}. \end{cases} \label{eqn:scriptk}
\end{align}

\section{Proofs of the main results}
\label{sec:proofs}
In this section, we collect the proof of our main results. We start with the norm-based complexity measure.

\subsection{Proof of Theorem~\ref{thm:regret_norm_based}}
We first take Lemma~\ref{lem:b_seq} for granted and conclude the proof of Theorem~\ref{thm:regret_norm_based} using the lemma. Suppose we play Algorithm~\ref{algo:main_algo} for $N$ epochs. The cumulative regret is given by
\begin{align*}
    R(T) \leq C_1 (2\tau + K) \|\theta^*\| + \sum_{i=1}^N R(\delta_i,b_i) (T_{i}),
\end{align*}
where $R(\delta_i,b_i) (T_{i})$ is the cumulative regret of the OFUL$^+_{\delta_i}(b_i)$ in the $i$-th epoch. As seen (by tracking the dependence on $\|\theta^*\|$) in \cite{osom}, the cumulative regret of OFUL$^+_{\delta_i}(b_i)$ scales linearly with $b_i$. Hence, we obtain
\begin{align*}
    R(T) \leq \sum_{i=1}^N b_i \, R(\delta_i,1)(T_{i}).
\end{align*}
Using Lemma 1, we obtain, with probability at least $1-8\delta_1$,
\begin{align*}
    R(T) \leq C_1 (2\tau + K) \|\theta^*\| + (c_1 \|\theta^*\| + c_2)  \sum_{i=1}^N  R(\delta_i,1)(T_{i})
\end{align*}
Theorem 3 of \cite{osom} gives,
\begin{align}
    R(\delta_i,1)(T_{i}) \leq  C (\sqrt{K} + \sqrt{d}) \sqrt{T_{i}} \log\left( \frac{K T_i}{\delta_i}\right)
\end{align}
with probability exceeding $1-5\delta_i$. With the doubling trick, we have
\begin{align*}
    T_i = 2^{i-1} T_1, \quad \quad \delta_i = \frac{\delta_1}{2^{i-1}}.
\end{align*}
Substituting, we obtain
\begin{align*}
     R(\delta_i,1)(T_{i}) \leq  C_1 (\sqrt{K} + \sqrt{d}) \sqrt{T_{i}} \log\left( \frac{K T_i}{\delta_i}\right) \left[ (2i -2) \log \left(\frac{K T_1}{\delta_1} \right) \right]
\end{align*}
with probability at least $1-5\delta_i$.

Using the above expression, we obtain
\begin{align*}
    R(T) \leq C_1 (2\tau + K) \|\theta^*\| + (C_2 \|\theta^*\| + C_3) \sum_{i=1}^N (\sqrt{K} + \sqrt{d}) \sqrt{T_{i}}  \left[ (2i -2) \log \left(\frac{K T_1}{\delta_1} \right) \right]
\end{align*}
with probability 
\begin{align*}
    & \geq 1- 8 \delta_1 - 5 \delta_1 \left( 1+ \frac{1}{2} + .. N\text{-th term} \right) \\
    & \geq 1- 8 \delta_1 - 5 \delta_1 \left( 1+ \frac{1}{2} + ... \right) \\
    & = 1- 8 \delta_1 - 10 \delta_1 \\
    & = 1-18\delta_1,
\end{align*}
where the term $8 \delta_1$ comes from Lemma~\ref{lem:b_seq}. Also, from the doubling principle, we obtain
\begin{align*}
    \sum_{i=1}^N 2^{i-1}T_1 = T \,\, \Rightarrow \, N  = \log_2 \left(1+ \frac{T}{T_1} \right).
\end{align*}
Using the above expression, we obtain
\begin{align*}
    R(T) & \leq C_1 (2\tau + K) \|\theta^*\| + (C_2 \|\theta^*\| + C_3) \sum_{i=1}^N (\sqrt{K} + \sqrt{d}) \sqrt{T_{i}}  \left[ (2i -2) \log \left(\frac{K T_1}{\delta_1} \right) \right] \\
    & \leq  C_1 (2\tau + K) \|\theta^*\| + 2(C_2 \|\theta^*\| + C_3) (\sqrt{K} + \sqrt{d}) \log \left(\frac{K T_1}{\delta_1} \right) \sum_{i=1}^N i \sqrt{T_i} \\
    & \leq  C_1 (2\tau + K) \|\theta^*\| + 2(C_2 \|\theta^*\| + C_3) (\sqrt{K} + \sqrt{d}) \log \left(\frac{K T_1}{\delta_1} \right) N \sum_{i=1}^N  \sqrt{T_i} \\
    & \leq  C_1 (2\tau + K) \|\theta^*\| + 2(C_2 \|\theta^*\| + C_3) (\sqrt{K} + \sqrt{d}) \log \left(\frac{K T_1}{\delta_1} \right) \log \left(\frac{T}{T_1}\right) \sum_{i=1}^N  \sqrt{T_i} \\
    & \leq  C_1 (2\tau + K) \|\theta^*\| + C( \|\theta^*\| + 1) (\sqrt{K} + \sqrt{d}) \log \left(\frac{K T_1}{\delta_1} \right) \log \left(\frac{T}{T_1}\right)  \sqrt{T},
\end{align*}
where the last inequality follows from the fact that
\begin{align*}
    \sum_{i=1}^N \sqrt{T_i} & = \sqrt{T_N}\left(1 + \frac{1}{\sqrt{2}} + \frac{1}{2} + .. N\text{-th term} \right)\\
    & \leq \sqrt{T_N}\left(1 + \frac{1}{\sqrt{2}} + \frac{1}{2} + ... \right)\\
    & = \frac{\sqrt{2}}{\sqrt{2} -1} \sqrt{T_N} \\
    & \leq \frac{\sqrt{2}}{\sqrt{2} -1} \sqrt{T}.
\end{align*}
The above regret bound holds with probability at least $1-18\delta_1$. 
 
 \vspace{3mm}
\subsection{Proof of Lemma~\ref{lem:b_seq}}
Let us consider the $i$-th epoch, and let $\hat{\theta}_{\mathcal{E}_i}$ be the least square estimate of $\theta^*$ at the end of epoch $i$. From the above section, the confidence interval at the end of epoch $i$, is given by
\begin{align*}
    \mathcal{C}_{\mathcal{E}_i} = \left \lbrace \theta \in \real^d: \|\theta - \hat{\theta}_{\mathcal{E}_i}\| \leq \mathcal{K}_{\delta_i}(b_i, T_i, T_i) \right \rbrace
\end{align*}
where we play OFUL$^+_{\delta_i}(b_i)$ during the $i$-th epoch, and $T_i$ is the number of total rounds in the $i$-th epoch. By choosing $T_1 > T_{\min}(\delta,T)$, we ensure that $T_i \geq T_{\min}(\delta,T_i)$. From equation~\eqref{eqn:scriptk}, and ignoring the non-dominant terms, we obtain
\begin{align*}
    \mathcal{K}_{\delta_i}(b_i,T_i,T_i) = \frac{\mathcal{M}_{\delta_i}(b_i,T_i)}{\sqrt{1+ \rho_{\min}\, T_i/2}} + \frac{\Upsilon_{\delta_i}(b_i,T_i,T_i)}{1+ \rho_{\min} \, T_i /2},
\end{align*}
with
\begin{align*}
    \mathcal{M}_{\delta_i}(b_i,T_i) \leq b_i + c_1 \sigma \sqrt{d} \log \left( \frac{T_i}{d \delta_i} \right)
\end{align*}
and
\begin{align*}
    \Upsilon_{\delta_i}(b_i,T_i,T_i) = 4 b_i \sqrt{T_i} \log \left(\frac{2KT_i}{\delta_i} \right) + c_2 \sigma \sqrt{T_i} \log  \left(\frac{2KT_i}{\delta_i} \right)
\end{align*}
Substituting the values, considering the dominating terms, and for a sufficiently large $T_i$, we obtain
\begin{align*}
    \mathcal{K}_{\delta_i}(b_i,T_i,T_i) & \leq \frac{7 b_i \log \left(\frac{2KT_i}{\delta_i}\right)}{\sqrt{\rho_{\min} \, T_i }} + C \frac{\sigma \sqrt{d}}{\sqrt{\rho_{\min} \,T_i}} \log \left(\frac{2KT_i}{\delta_i} \right) \\
    & \leq \frac{7 b_i \log \left(\frac{2KT}{\delta_i}\right)}{\sqrt{\rho_{\min} \,T_i}} + C \frac{\sigma \sqrt{d}}{\sqrt{\rho_{\min} \,T_i}} \log \left(\frac{2KT_i}{\delta_i} \right) 
\end{align*}
where $C$ is an universal constant.
From Lemma 2 of \cite{osom}, we know that $\theta^* \in \mathcal{C}_{\mathcal{E}_i}$ with probability at least $1-4\delta_i$. Hence, we obtain
\begin{align*}
    \|\hat{\theta}_{\mathcal{E}_i}\| \leq \|\theta^*\| + 2\mathcal{K}_{\delta_i}(b_i,T_i,T_i) \leq \|\theta^*\| + \frac{14 b_i \log \left(\frac{2KT_i}{\delta_i}\right)}{\sqrt{\rho_{\min}\, T_i}} + 2C \frac{\sigma \sqrt{d}}{\sqrt{\rho_{\min} \,T_i}} \log \left(\frac{2KT_i}{\delta_i} \right)
\end{align*}
Recall from Algorithm~\ref{algo:main_algo} that at the end of the $i$-th epoch, we set the length $T_{i+1} = 2 T_{i}$, and the estimate of $\|\theta^*\|$ is set to
\begin{align*}
    b_{i+1} = \max_{\theta \in \mathcal{C}_{\mathcal{E}_i}} \|\theta\|.
\end{align*}
From the definition of $\mathcal{C}_{\mathcal{E}_i}$, we obtain
\begin{align*}
    b_{i+1} =\|\hat{\theta}_{\mathcal{E}_i}\| + \mathcal{K}_{\delta_i}(b_i,T_i,T_i) \leq \frac{7 b_i \log \left(\frac{2KT_i}{\delta_i}\right)}{\sqrt{\rho_{\min} \,T_i}} + C \frac{\sigma \sqrt{d}}{\sqrt{\rho_{\min}\,T_i}} \log \left(\frac{2KT_i}{\delta_i} \right).
\end{align*}
Re-writing the above expression, with probability at least $1-4\delta_i$, we obtain
\begin{align}
    b_{i+1} &\leq \|\theta^*\| + \left(\frac{7 \log \left(\frac{2KT_i}{\delta_i}\right) }{\sqrt{\rho_{\min}}} \right) \frac{b_i }{\sqrt{T_i}} +  \left(\frac{ C \sigma \log \left(\frac{2KT_i}{\delta_i} \right)}{\sqrt{\rho_{\min}}} \right)\frac{ \sqrt{d}}{\sqrt{T_i}} \nonumber \\
    & \leq \|\theta^*\|+ i p \frac{b_i }{\sqrt{T_i}} + i q \frac{\sqrt{d}}{\sqrt{T_i}} \nonumber \\
    & \leq \|\theta^*\|+ i p \frac{b_i}{2^{\frac{i-1}{2}} \sqrt{T_1}} + i q \frac{\sqrt{d}}{2^{\frac{i-1}{2}}\sqrt{T_1}} \label{eqn:est_upper_bound}
\end{align}
where we use the fact that $\delta_i = \frac{\delta_1}{2^{i-1}}$ and $T_i = 2^{\frac{i-1}{2}}T_1$, and we have
\begin{align*}
    p = \left(\frac{14 \log \left(\frac{2KT_1}{\delta_1}\right) }{\sqrt{\rho_{\min}}} \right)
\end{align*}
and
\begin{align*}
    q = \left(\frac{ 2C \sigma \log \left(\frac{2KT_1}{\delta_1} \right)}{\sqrt{\rho_{\min}}} \right).
\end{align*}
Hence, we obtain
\begin{align*}
    b_{i+1} - b_i \leq \|\theta^*\| + i q \frac{\sqrt{d}}{2^{\frac{i-1}{2}}\sqrt{T_1}}  - \left( 1 - i p \frac{1}{2^{\frac{i-1}{2}}\sqrt{T_1}} \right) b_i.
\end{align*}
From the construction of $b_i$, we have $-b_i \leq -\|\theta^*\|$. Hence provided
\begin{align*}
    T_1 \geq  \frac{i^2 p^2}{2^{i-1}},
\end{align*}
which is equivalent to the condition $T_1 \geq 3 p^2$ (using the fact that $\frac{i^2}{2^{i-1}} \leq 3$ for $i \geq 1$), we obtain
\begin{align*}
    b_{i+1} - b_{i} \leq \left( i p \frac{1}{2^{\frac{i-1}{2}}\sqrt{T_1}} \right) \|\theta^*\| + i q \frac{\sqrt{d}}{2^{\frac{i-1}{2}}\sqrt{T_1}}.
\end{align*}
From the above expression, we obtain
\begin{align*}
    \sup_i b_i < \infty.
\end{align*}
with probability 
\begin{align*}
   & \geq 1- 4\delta_1 \left( 1+ \frac{1}{2} + \frac{1}{4}  + ...  \right) \\
   & = 1- 8 \delta_1.
\end{align*}
Invoking Equation~\eqref{eqn:est_upper_bound} and using the above fact in conjunction yield (with probability at least $1 - 8\delta_1$)
\begin{align*}
    \lim_{i \rightarrow \infty} b_i \leq \|\theta^*\|.
\end{align*}
However, from construction $b_i \geq \|\theta^*\|$. Using this, along with the above equation, we obtain
\begin{align*}
    \lim_{i \rightarrow \infty} b_i = \|\theta^*\|.
\end{align*}
with probability exceeding $1 - 8\delta_1$. So, the sequence $\{b_0,b_1,...\}$ converges to $\|\theta^*\|$ with high probability, and hence our successive refinement algorithm is consistent.
\vspace{2mm}
\paragraph{Rate of Convergence:} Since
\begin{align}
    b_i - b_{i-1} = \Tilde{\mathcal{O}}\left( \frac{i}{2^i} \right),
\end{align}
with probability greater than $1-4\delta_i$, the rate of convergence of the sequence $\{b_i\}_{i=0}^\infty$ is exponential in the number of epochs.

\vspace{2mm}
\paragraph{Uniform upper bound on $b_i$ for all $i$:} We now compute a uniform upper bound on $b_i$ for all $i$. Consider the sequence $ \Bigg \{\frac{i}{2^{\frac{i-1}{2}}} \Bigg \}_{i=1}^\infty$, and let $t_j$ denote the $j$-th term of the sequence. It is easy to check that $\sup_{i}t_i = 1.5$, and that the sequence $\{t_i\}_{i=1}^\infty$ is convergent. With this new notation, we have
\begin{align*}
    b_2 \leq \|\theta^*\| + t_1 \frac{p b_1}{\sqrt{T_1}} + t_1 \frac{q \sqrt{d}}{\sqrt{T_1}}.
\end{align*}
with probability exceeding $1-4\delta_1$. Similarly, for $b_3$, we have
\begin{align*}
    b_3 & \leq \|\theta^*\| + t_2 \frac{p b_1}{\sqrt{T_1}} + t_2 \frac{q \sqrt{d}}{\sqrt{T_1}} \\
    & \leq  \left (1+t_2 \frac{p}{\sqrt{T_1}} \right) \|\theta^*\| + \left( t_1 t_2 \frac{p}{\sqrt{T_1}} \frac{p}{\sqrt{T_1}} b_1 \right) + \left( t_1 t_2 \frac{p}{\sqrt{T_1}} \frac{q \sqrt{d}}{\sqrt{T_1}} + t_2 \frac{q \sqrt{d}}{\sqrt{T_1}}\right).
\end{align*}
with probability at least $1 - 4\delta_1 -4\delta_2 = 1-6\delta_1$. Similarly, we write expressions for $b_4,b_5,...$. Now, provided $T_1 \geq C_1 \left ( \max\{p,q\}  \, b_1 \right)^2 \, d$, where $C_1 > 9$ is a sufficiently large constant, the expression for $b_i$ can be upper-bounded as
\begin{align}
    b_i \leq \left(c_1 \| \theta^* \| + c_2\right),
\end{align}
with probability 
\begin{align*}
   & \geq 1- 4\delta_1 \left( 1+ \frac{1}{2} + \frac{1}{4}  + ... \text{upto } i\text{-th term} \right) \\
   & \geq 1- 4\delta_1 \left( 1+ \frac{1}{2} + \frac{1}{4}  + ...  \right) \\
   & = 1- 8 \delta_1.
\end{align*}
Here $c_1$ and $c_2$ are constants, and are obtained from summing an infinite geometric series with decaying step size. We also use the fact that $b_1 \geq 1$, and the fact that $\delta_i = \frac{\delta_1}{2^{i-1}}$.

\vspace{3mm}
\subsection{Proof of Theorem \ref{thm:adaptive_dimension}}

We shall need the following lemma from
\cite{linear_reg_guarantees}, on the behaviour of linear regression estimates.

\begin{lemma}
If $M \geq d$ and satisfies $M = O \left( \left(\frac{1}{\varepsilon^2} + d \right) \ln \left(\frac{1}{\delta}\right)\right)$, and $\widehat{\theta}^{(M)}$ is the least-squares estimate of $\theta^*$, using the $M$ random samples for feature, where each feature is chosen uniformly and independently on the unit sphere in $d$ dimensions, then with probability $1$, $\widehat{\theta}$ is well defined (the least squares regression has an unique solution). Furthermore,
\begin{align*}
    \mathbb{P}[||\widehat{\theta}^{(M)} - \theta^*||_{\infty} \geq \varepsilon] \leq \delta.
\end{align*}
\label{lem:reg_bounds}
\end{lemma}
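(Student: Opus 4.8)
The plan is to use the standard least-squares error decomposition together with a conditioning argument that reduces the $\ell_\infty$ bound to a family of scalar sub-Gaussian tail bounds, isolating the one genuinely random-matrix ingredient: a lower bound on the smallest eigenvalue of the empirical Gram matrix. Write $A \in \mathbb{R}^{M \times d}$ for the matrix whose rows are the sampled feature vectors, $\eta \in \mathbb{R}^M$ for the stacked noise, and set $\Sigma \defn A^\top A$. Since the rewards obey $y = A\theta^* + \eta$, the least-squares estimate satisfies the exact identity $\widehat{\theta}^{(M)} - \theta^* = \Sigma^{-1} A^\top \eta$ whenever $\Sigma$ is invertible. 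First I would settle well-definedness: the rows of $A$ are drawn from the uniform law on the sphere, which is absolutely continuous, so any $d$ of them are linearly independent almost surely; hence for $M \geq d$ the matrix $\Sigma$ is a.s.\ nonsingular and $\widehat{\theta}^{(M)}$ is uniquely defined.

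Next I would condition on the design $A$. For each coordinate $j \in [d]$, the quantity $(\widehat{\theta}^{(M)} - \theta^*)_j = \langle A \Sigma^{-1} e_j,\, \eta\rangle$ is a fixed linear functional of the independent, mean-zero, $\sigma$-sub-Gaussian noise vector, so conditionally it is sub-Gaussian with variance proxy $\sigma^2 \|A\Sigma^{-1}e_j\|^2 = \sigma^2\, e_j^\top \Sigma^{-1}(A^\top A)\Sigma^{-1} e_j = \sigma^2 (\Sigma^{-1})_{jj}$. A scalar sub-Gaussian tail bound and a union bound over the $d$ coordinates then yield
$$\mathbb{P}\!\left[\|\widehat{\theta}^{(M)} - \theta^*\|_\infty \geq \varepsilon \,\middle|\, A\right] \leq 2d \exp\!\left(-\frac{\varepsilon^2}{2\sigma^2 \max_{j}(\Sigma^{-1})_{jj}}\right),$$
and since $\max_j (\Sigma^{-1})_{jj} \leq \|\Sigma^{-1}\|_{\mathrm{op}} = \lambda_{\min}(\Sigma)^{-1}$, it suffices to control $\lambda_{\min}(\Sigma)$ from below.

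The main obstacle, and the only place the randomness of the design really enters, is this lower bound on $\lambda_{\min}(A^\top A)$. Here I would invoke a matrix Chernoff (or matrix Bernstein) inequality for the sum $\Sigma = \sum_{i=1}^M x_i x_i^\top$ of independent rank-one terms with $\|x_i x_i^\top\|_{\mathrm{op}} = 1$ and mean $\mathbb{E}[\Sigma] = M\, M_M$, whose smallest eigenvalue equals $M\lambda_{\min}^{(M)}$. This gives that, once $M \gtrsim \frac{1}{\lambda_{\min}^{(M)}}\ln(d/\delta)$, one has $\lambda_{\min}(\Sigma) \geq \tfrac12 M\lambda_{\min}^{(M)}$ on an event of probability at least $1-\delta/2$. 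On this event the conditional bound becomes $2d\exp\!\big(-\varepsilon^2 M\lambda_{\min}^{(M)}/(4\sigma^2)\big)$.

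Finally I would combine the two events. Choosing $M$ of the stated order forces both the eigenvalue-failure probability and the integrated coordinate-tail probability below $\delta/2$ each, so that marginalizing over $A$ gives $\mathbb{P}[\|\widehat{\theta}^{(M)} - \theta^*\|_\infty \geq \varepsilon] \leq \delta$. The delicate bookkeeping lies in simultaneously meeting the two requirements on $M$ — the term driven by the matrix concentration (which scales like $\frac{1}{\lambda_{\min}^{(M)}}\ln(1/\delta)$) and the term driven by the coordinate tails (which scales like $\frac{1}{\varepsilon^2\lambda_{\min}^{(M)}}\ln(1/\delta)$) — and in tracking the precise dependence on $\lambda_{\min}^{(M)}$ and $\lambda_{\max}^{(M)}$, so that the resulting constants are compatible with those entering the definition of $T_0$ in Equation~(\ref{eqn:T_0_defn}).
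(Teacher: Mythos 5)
Your proposal cannot be compared line-by-line with the paper's argument, because the paper does not actually prove Lemma~\ref{lem:reg_bounds}: it is imported wholesale from the external reference \cite{linear_reg_guarantees}, and the surrounding text only applies it. Your argument is therefore the natural self-contained substitute, and its skeleton is the standard and correct one: almost-sure invertibility of $\Sigma = A^\top A$ from absolute continuity of the design; the exact identity $\widehat{\theta}^{(M)}-\theta^* = \Sigma^{-1}A^\top \eta$; conditional sub-Gaussianity of each coordinate with variance proxy $\sigma^2(\Sigma^{-1})_{jj}$; a union bound over the $d$ coordinates; and a matrix Chernoff bound to guarantee $\lambda_{\min}(\Sigma) \geq \tfrac{1}{2}M\lambda_{\min}^{(M)}$ with probability $1-\delta/2$. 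Each of these steps is individually sound.

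The gap is in the final bookkeeping, which you explicitly defer, and it is not merely delicate --- it does not close for the lemma as literally stated. For rows uniform on the \emph{unit} sphere, $\mathbb{E}[xx^\top] = \frac{1}{d}I$, so $\lambda_{\min}^{(M)} = 1/d$, and on your good event $(\Sigma^{-1})_{jj} \approx d/M$. Your coordinate-tail requirement $2d\exp\left(-\varepsilon^2 M \lambda_{\min}^{(M)}/(4\sigma^2)\right) \leq \delta/2$ then forces $M \gtrsim \sigma^2 d\,\varepsilon^{-2}\ln(d/\delta)$, which for small $\varepsilon$ is a factor of $d$ larger than the advertised $O\left((\varepsilon^{-2}+d)\ln(1/\delta)\right)$. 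Moreover, this is not slack in your argument: conditional on $A$, the $j$-th coordinate error has variance exactly $\sigma^2(\Sigma^{-1})_{jj}\approx \sigma^2 d/M$ (e.g., for Gaussian noise), so even a single coordinate cannot reach accuracy $\varepsilon$ with confidence $1-\delta$ unless $M \gtrsim \sigma^2 d\,\varepsilon^{-2}\ln(1/\delta)$; no proof can attain the stated rate under the unit-sphere normalization. The stated sample complexity is correct only under the isotropic scaling $\mathbb{E}[xx^\top]=I$ (features on the sphere of radius $\sqrt{d}$), in which case your two requirements become $\sigma^2\varepsilon^{-2}\ln(d/\delta)$ for the coordinate tails and $d\ln(d/\delta)$ for the matrix Chernoff step (the summands now have operator norm $d$), and the lemma follows exactly by your argument. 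So a complete write-up must either rescale the features or let the bound carry $1/\lambda_{\min}^{(M)}$ factors as in your general form; this normalization mismatch sits in the paper's statement (inherited from the citation) as much as in your proof, but it cannot be waved off as constant-tracking. (Separately, the hypothesis ``$M = O(\cdot)$'' in the statement should of course read $M = \Omega(\cdot)$; your reading of it as a lower bound on $M$ is the right one.)
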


We shall now apply the theorem as follows. Denote by $\widehat{\theta}_{i}$ to be the estimate of $\theta^*$ at the beginning of any phase $i$, using all the samples from random explorations in all phases less than or equal to $i-1$.

\begin{remark}
The choice $T_0 := O \left( d^2 \ln^2 \left( \frac{1}{\delta} \right) \right)$ in Equation (\ref{eqn:T_0_defn}) is chosen such that from Lemma \ref{lem:linear_reg_guarantee}, we have that 
\begin{align*}
    \mathbb{P}\left[||\widehat{\theta}^{(\lceil \sqrt{T_0}\rceil)} - \theta^*||_{\infty} \geq \frac{1}{2} \right] \leq \delta
\end{align*}
\label{remark:choice_of_TO}
\end{remark}

\begin{lemma}
Suppose $T_0 = O \left( d^2 \ln^2 \left( \frac{1}{\delta} \right) \right)$ is set according to Equation (\ref{eqn:T_0_defn}). Then, for all phases $i \geq 4$, 
\begin{align}
    \mathbb{P} \left[ || \widehat{\theta}_i - \theta^*||_{\infty} \geq 2^{-i}\right] \leq \frac{\delta}{2^i},
    \label{eqn:cordinates_guarantee}
\end{align}
where $\widehat{\theta}_i$ is the estimate of $\theta^*$ obtained by solving the least squares estimate using all random exploration samples until the beginning of phase $i$.
\label{lem:linear_reg_guarantee}
\end{lemma}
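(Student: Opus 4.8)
The plan is to recognize $\widehat{\theta}_i$ as an ordinary least-squares estimate built from a \emph{pooled} collection of independent random-exploration samples, and then invoke Lemma~\ref{lem:reg_bounds} with the accuracy $\varepsilon_i = 2^{-i}$ and confidence $\delta_i = \delta/2^i$ matched to the target. First I would count the samples: by construction every random-exploration step plays an arm drawn uniformly and independently from the unit sphere, and $\widehat{\theta}_i$ is formed from all such steps in phases $0,\ldots,i-1$. Since phase $j$ contributes $5^j\lceil\sqrt{T_0}\rceil$ exploration samples, the total is
\begin{align*}
M_i \;=\; \sum_{j=0}^{i-1} 5^j \lceil\sqrt{T_0}\rceil \;=\; \frac{5^i-1}{4}\,\lceil\sqrt{T_0}\rceil .
\end{align*}
Because these samples are i.i.d.\ uniform on the sphere across all phases, they may be pooled and Lemma~\ref{lem:reg_bounds} applies verbatim to the $M_i$-sample regression.

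Next I would apply the lemma. Taking $\varepsilon = 2^{-i}$ and confidence $\delta/2^i$, it yields $\Prob[\|\widehat{\theta}_i - \theta^*\|_\infty \geq 2^{-i}] \leq \delta/2^i$ provided the sample-size hypothesis holds, namely
\begin{align*}
M_i \;\geq\; C\Big(\tfrac{1}{\varepsilon_i^2} + d\Big)\ln\!\Big(\tfrac{1}{\delta_i}\Big) \;=\; C\big(4^i + d\big)\,\big(i\ln 2 + \ln(1/\delta)\big),
\end{align*}
for the universal constant $C$ of Lemma~\ref{lem:reg_bounds}. Thus the entire claim reduces to verifying this single inequality for every $i \geq 4$.

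To discharge it I would use the scaling of $T_0$. Reading off Equation~(\ref{eqn:T_0_defn}) (equivalently, the base case of the Remark that $\lceil\sqrt{T_0}\rceil$ samples already achieve accuracy $1/2$ at confidence $\delta$), one has $\lceil\sqrt{T_0}\rceil = \Theta\big(d\ln(1/\delta)\big)$, which matches the form $(4 + d)\ln(1/\delta)$ of Lemma~\ref{lem:reg_bounds} at $\varepsilon = 1/2$. Substituting the resulting lower bound $\lceil\sqrt{T_0}\rceil \geq c\,(4+d)\ln(1/\delta)$ into $M_i = \tfrac{5^i-1}{4}\lceil\sqrt{T_0}\rceil$ turns the required inequality into a comparison of the accumulated budget $\sim 5^i d\ln(1/\delta)$ against the demand $\sim 4^i(i + \ln(1/\delta))$. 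Dividing through by $4^i$, it suffices that $(5/4)^i\, d\ln(1/\delta)$ dominate $i\ln 2 + \ln(1/\delta)$, which holds for all $i \geq 4$ since the geometric factor $(5/4)^i$ outgrows the linear-in-$i$ confidence term, and $i = 4$ is where the constants baked into $T_0$ first make this dominance uniform.

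The main obstacle is precisely this last growth-rate bookkeeping: one must confirm that a \emph{single} choice of $T_0$ from Equation~(\ref{eqn:T_0_defn}) makes the accumulated budget $M_i$ exceed the $i$-dependent requirement $C(4^i+d)\ln(2^i/\delta)$ simultaneously for every $i \geq 4$, not merely for asymptotically large $i$. The geometric gap between the exploration growth rate $5^i$ and the accuracy demand $4^i$ is what supplies a uniform margin; the role of the threshold $i \geq 4$ is to absorb the slowly growing $\ln(2^i/\delta)$ factor and the additive $d$ term, and to ensure that the $\lceil\cdot\rceil$ rounding and the $(5^i-1)/4$ versus $5^i/4$ discrepancy do not break the bound at the smallest admissible phase index.
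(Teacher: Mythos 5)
Your proposal follows essentially the same route as the paper's own proof: pool the $\sum_{j=0}^{i-1}5^j\lceil\sqrt{T_0}\rceil = \frac{5^i-1}{4}\lceil\sqrt{T_0}\rceil$ exploration samples, invoke Lemma~\ref{lem:reg_bounds} with $\varepsilon_i = 2^{-i}$ and confidence $\delta/2^i$, and verify that the geometric exploration budget ($\sim 5^i$) dominates the required sample count ($\sim 4^i\ln(2^i/\delta)$, with $\sqrt{T_0}\gtrsim d\ln(1/\delta)$ absorbing the $d$ and $\ln(1/\delta)$ factors) for $i \geq 4$. Your closing bookkeeping concern is legitimate but applies equally to the paper's argument, which performs the same constant-level hand-off inside the $O(\cdot)$ notation.
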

\begin{proof}
The above lemma follows directly from Lemma \ref{lem:reg_bounds}. 
Lemma \ref{lem:reg_bounds} gives that if $\widehat{\theta}_i$ is formed by solving the least squares estimate with at-least $M_i := O \left( \left( 4^i + d\right)\ln \left( \frac{2^i}{\delta} \right) \right)$ samples, then the guarantee in Equation (\ref{eqn:cordinates_guarantee}) holds. 
However, as $T_0 = O \left( (d+1) \ln \left( \frac{2}{\delta} \right) \right)$, we have naturally that $M_i \leq 4^i i \sqrt{T_0}$.
The proof is concluded if we show that at the beginning of phase $i \geq 4$, the total number of random explorations performed by the algorithm exceeds 
$i4^i \lceil \sqrt{T_0} \rceil$.
Notice that at the beginning of any phase $i \geq 4$, the total number of random explorations that have been performed is 
\begin{align*}
    \sum_{j=0}^{i-1}5^i \lceil \sqrt{T_0} \rceil & = \lceil \sqrt{T_0} \rceil \frac{5^i-1}{4},\\
    &\geq i4^i \lceil \sqrt{T_0} \rceil,
\end{align*}
where the last inequality holds for all $i \geq 4$.
\end{proof}

The following corollary follows from a straightforward union bound.
\begin{corollary}
\begin{align*}
    \mathbb{P} \left[ \bigcap_{i \geq 4} || \left\{\widehat{\theta}_i - \theta^*||_{\infty} \leq 2^{-i} \right\} \right] \geq 1 - \delta.
\end{align*}
\label{cor:good_coordinate_behaviour}
\end{corollary}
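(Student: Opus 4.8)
The plan is to obtain this statement as a direct union bound over the per-phase guarantees already established in Lemma~\ref{lem:linear_reg_guarantee}. First I would pass to complements via De Morgan's law: the event whose probability we wish to lower-bound is $\bigcap_{i \geq 4}\{\|\widehat{\theta}_i - \theta^*\|_\infty \leq 2^{-i}\}$, whose complement is the union $\bigcup_{i \geq 4}\{\|\widehat{\theta}_i - \theta^*\|_\infty > 2^{-i}\}$. It therefore suffices to show that this union has probability at most $\delta$.

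Next I would apply the union (Boole) bound to this countable union and substitute the tail estimate from Lemma~\ref{lem:linear_reg_guarantee}, which gives $\mathbb{P}[\|\widehat{\theta}_i - \theta^*\|_\infty \geq 2^{-i}] \leq \delta/2^i$ for every $i \geq 4$. This yields
\begin{align*}
    \mathbb{P}\left[\bigcup_{i \geq 4}\{\|\widehat{\theta}_i - \theta^*\|_\infty > 2^{-i}\}\right]
    \leq \sum_{i \geq 4} \frac{\delta}{2^i}
    = \delta \cdot \frac{2^{-4}}{1 - 1/2}
    = \frac{\delta}{8}
    \leq \delta.
\end{align*}
Taking complements gives the claimed bound of $1-\delta$ (in fact the slack is comfortable, since the geometric tail sums to only $1/8$). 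The one point worth flagging is that Lemma~\ref{lem:linear_reg_guarantee} is stated only for $i \geq 4$, which is exactly why the intersection in the corollary ranges over $i \geq 4$; the finitely many early phases $i < 4$ are simply not part of the event, so no separate argument is needed for them.

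Honestly, there is no real obstacle here: the entire content is the geometric summability of the failure probabilities $\delta/2^i$, which was deliberately arranged in Lemma~\ref{lem:linear_reg_guarantee} through the choice $\delta_i = \delta/2^i$ precisely so that a single union bound survives uniformly over all phases. The more substantive work --- controlling the number of random-exploration samples accumulated by phase $i$ and invoking the linear-regression concentration of Lemma~\ref{lem:reg_bounds} --- has already been discharged in the proof of Lemma~\ref{lem:linear_reg_guarantee}, so this corollary is genuinely a one-line consequence.
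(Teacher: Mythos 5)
Your proof is correct and follows essentially the same route as the paper's: De Morgan's law, a union bound over the phases $i \geq 4$, and the geometric sum of the tail probabilities $\delta/2^i$ from Lemma~\ref{lem:linear_reg_guarantee}. The only difference is cosmetic --- you evaluate the tail exactly as $\delta/8$, while the paper loosens it to $\delta/2$ by extending the sum to $i \geq 2$; both comfortably yield the stated $1-\delta$ bound.
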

\begin{proof}
This follows from a simple union bound as follows.
\begin{align*}
        \mathbb{P} \left[ \bigcap_{i \geq 4} \left\{||  \widehat{\theta}_i - \theta^*||_{\infty} \leq 2^{-i} \right\} \right] &= 1 - \mathbb{P} \left[ \bigcup_{i \geq 4}\left\{|| \widehat{\theta}_i - \theta^*||_{\infty} \geq 2^{-i} \right\} \right],\\
        &\geq 1 - \sum_{i \geq 4} \mathbb{P} \left[ ||\widehat{\theta}_i-\theta^*||_{\infty} \geq 2^{-i} \right],\\
        &\geq 1 - \sum_{i\geq 4} \frac{\delta}{2^i},\\
        &\geq 1 - \sum_{i\geq 2}\frac{\delta}{2^i},\\
        &=1-\frac{\delta}{2}.
\end{align*}
\end{proof}

We are now ready to conclude the proof of Theorem \ref{thm:adaptive_dimension}.

\begin{proof}[Proof of Theorem \ref{thm:adaptive_dimension}]

We know from Corollary \ref{cor:good_coordinate_behaviour}, that with probability at-least $1 - \delta$, for all phases $i \geq 4$, we have $||\widehat{\theta}_i-\theta^*||_{\infty} \leq 2^{-i}$.
Call this event $\mathcal{E}$. 
Now, consider the phase $i(\gamma):= \max \left(4,\log_{2} \left( \frac{1}{\gamma} \right)\right)$. 
Now, when event $\mathcal{E}$ holds, then for all phases $i \geq i(\gamma)$, $\mathcal{D}_i$ is the correct set of $d^*$ non-zero coordinates of $\theta^*$. Thus, with probability at-least $1-\delta$, the total regret upto time $T$ can be upper bounded as follows
\begin{align}
    R_T  \leq \sum_{j=0}^{i(\gamma)-1} \left( 25^i T_0 + 5^i \lceil \sqrt{T_0} \rceil \right) & + \sum_{j \geq i(\gamma)}^{\bigg\lceil \log_{25} \left( \frac{T}{T_0} \right) \bigg\rceil} \text{Regret}(\text{OFUL}(1,\delta_i;25^i T_0) \nonumber \\
    & + \sum_{j=i(\gamma)}^{\bigg\lceil \log_{25} \left( \frac{T}{T_0} \right) \bigg\rceil} 5^j \lceil \sqrt{T_0} \rceil.
    \label{eqn:regret_decomposition_dimension}
\end{align}
The term $\text{Regret(OFUL}(L,\delta,T)$ denotes the regret of the OFUL algorithm \cite{oful}, when run with parameters $L \in \mathbb{R}_+$, such that $\|\theta^*\| \leq L$, and $\delta \in (0,1)$ denotes the probability slack and $T$ is the time horizon.
Equation (\ref{eqn:regret_decomposition_dimension}) follows, since the total number of phases is at-most $\bigg\lceil \log_{25} \left( \frac{T}{T_0} \right) \bigg\rceil$.
Standard result from \cite{oful} give us that, with probability at-least $1-\delta$, we have
\begin{align*}
  \text{Regret}(\text{OFUL}(1,\delta; T) \leq 
  4 \sqrt{Td^* \ln \left( 1 + \frac{T}{d^*} \right)}\left(1 + \sigma \sqrt{2 \ln \left( \frac{1}{\delta}\right)+d^*\ln \left( 1 + \frac{T}{d} \right)} \right).
\end{align*}
Thus, we know that with probability at-least $1 - \sum_{i \geq 4}\delta_i \geq 1-\frac{\delta}{2}$, for all phases $i \geq i(\gamma)$, the regret in the exploration phase satisfies
\begin{align}
    \text{Regret}(\text{OFUL}(1,\delta_i;25^i T_0) & \leq  4\sqrt{d^*25^iT_0 \ln \left(1+ \frac{25^iT_0}{d^*}\right)} \nonumber \\
    & \times \left(1 + \sigma\sqrt{2 \ln \left( \frac{2^i}{\delta}\right) + d^* \ln \left( 1+\frac{25^iT_0}{d^*}\right)} \right).
    \label{eqn:intermediate1}
\end{align}

In particular, for all phases $i \in [i(\gamma), \lceil \log_{25}\left( \frac{T}{T_0} \right)]$, with probability  at-least $1-\frac{\delta}{2}$, we have 
\begin{align}
     \text{Regret}(\text{OFUL}(1,\delta_i;25^i T_0) &\leq  4\sqrt{d^*25^iT_0 \ln \left(1+ \frac{25T}{d^*}\right)} \nonumber \\
     & \times \left(1 + \sigma \sqrt{2 \ln \left( \frac{T}{T_0\delta} \right) + d^* \ln \left( 1+\frac{25T}{d^*}\right)} \right), \nonumber\\
     &= \mathcal{C}(T,\delta,d^*) \sqrt{25^iT_0},
     \label{eqn:intermediate_eqn1}
\end{align}
where the constant captures all the terms that only depend on $T$, $\delta$ and $d^*$. We can write that constant as 
\begin{align*}
   \mathcal{C}(T,\delta,d^*) = 4 \sqrt{d^*\ln \left( 1 + \frac{25T}{d^*} \right)} \left(1 + \sigma \sqrt{2 \ln \left( \frac{T}{T_0\delta} \right) + d^* \ln \left( 1+\frac{25T}{d^*}\right)} \right).
\end{align*}

Equation (\ref{eqn:intermediate_eqn1}) follows, by substituting $i \leq \log_{25} \left( \frac{T}{T_0}\right)$ in all terms except the first $25^i$ term in Equation (\ref{eqn:intermediate1}).
As Equations (\ref{eqn:intermediate_eqn1}) and (\ref{eqn:regret_decomposition_dimension}) each hold with probability at-least $1-\frac{\delta}{2}$, we can combine them to get that with probability at-least $1-\delta$,
\begin{align*}
    R_T &\leq 2T_025^{i(\gamma)} + \sum_{j=0}^{\log_{25}\left( \frac{T}{T_0} \right)+1} \mathcal{C}(T,\delta,d^*) \sqrt{25^jT_0} + 25 \lceil\sqrt{T_0} \rceil 5^{\log_{25} \left( \frac{T}{T_0}\right)} ,\\
    &\leq 2T_025^{i(\gamma)} + 25 \sqrt{T} + \mathcal{C}(T,\delta,d^*)\sum_{j=0}^{\log_{25}\left( \frac{T}{T_0} \right)+1}\sqrt{25^j T_0},\\
    &\stackrel{(a)}{\leq} 50 T_0 \frac{2}{{\gamma^{4.65}}} + 25\sqrt{T} + 25\sqrt{T}\mathcal{C}(T,\delta,d^*),\\
    &= O \left( \frac{d^2}{{\gamma^{4.65}}} \ln^2 \left( \frac{1}{\delta} \right) \right) + \widetilde{O} \left( d^*\sqrt{ T \ln\left( \frac{1}{\delta} \right)} \right).
\end{align*}
Step $(a)$ follows from $25 \leq 2^{4.65}$.

\end{proof}

\section{{\ttfamily ALB-Dim} for Stochastic Contextual Bandits with Finite Arms}
\label{appendix-comparision}

\subsection{ALB-Dim Algorithm for the Finite Armed Case}

The algorithm given in Algorithm \ref{algo:main_algo_dimensions_foster}  is identical to the earlier Algorithm \ref{algo:main_algo_dimensions_unknown}, except in Line $8$, this algorithm uses {\ttfamily SupLinRel} \cite{linRel} as opposed to OFUL used in the previous algorithm.
In practice, one could also use {\ttfamily LinUCB} \cite{chu2011contextual} in place of {\ttfamily SupLinRel}. However, we choose to present the theoretical argument using {\ttfamily SupLinRel}, as unlike {\ttfamily LinUCB}, has an explicit closed form regret bound \cite{linRel}.
The pseudocode is provided in Algorithm \ref{algo:main_algo_dimensions_foster}.

In phase $i \in \mathbb{N}$, the {\ttfamily SupLinRel} algorithm is instantiated with input parameter $25^iT_0$ denoting the time horizon, slack parameter $\delta_i \in (0,1)$, dimension $d_{\mathcal{M}_i}$ and feature scaling $b(\delta)$. We explain the role of these input parameters. The dimension ensures that {\ttfamily SupLinRel} plays from the restricted dimension $d_{\mathcal{M}_i}$. The feature scaling implies that when a context $x \in \mathcal{X}$ is presented to the algorithm, the set of $K$ feature vectors, each of which is $d_{\mathcal{M}_i}$ dimensional are $\frac{\phi^{d_{\mathcal{M}_i}}(x,1)}{b(\delta)},\cdots, \frac{\phi^{d_{\mathcal{M}_i}}(x,K)}{b(\delta)}$. The constant $b(\delta) \coloneqq O \left( \tau \sqrt{\log \left( \frac{TK}{\delta} \right)} \right)$ is chosen such that 
\begin{align*}
    \mathbb{P}\left[\sup_{t \in [0,T],a \in \mathcal{A}} \| \phi^M(x_t,a) \|_2 \geq b(\delta) \right] \leq \frac{\delta}{4}.
\end{align*}
Such a constant exists since $(x_t)_{t \in [0,T]}$ are i.i.d. and $\phi^M(x,a)$ is a sub-gaussian random variable with parameter $4\tau^2$, for all $a \in \mathcal{A}$. Similar idea was used in \cite{foster_model_selection}.

\begin{algorithm}[t!]
  \caption{Adaptive Linear Bandit (Dimension) with Finitely Many arms}
  \begin{algorithmic}[1]
 \STATE  \textbf{Input:} Initial Phase length $T_0$ and slack $\delta > 0$.
 \STATE $\widehat{\beta}_0 = \mathbf{1}$, $T_{-1}=0$
 \FOR {Each epoch $i \in \{0,1,2,\cdots\}$}
 \STATE $T_i = 25^{i} T_0$, $\quad$  $\varepsilon_i \gets \frac{1}{2^{i}}$, $\quad$  $\delta_i \gets \frac{\delta}{2^{i}}$
 \STATE $\mathcal{D}_i := \{i : |\widehat{\beta}_i| \geq \frac{\varepsilon_i}{2} \}$
 \STATE $\mathcal{M}_i \coloneqq \inf \{ m : d_m \geq \max \mathcal{D}_i \}$.
 \FOR {Times $t \in \{T_{i-1}+1,\cdots,T_i\}$}
 \STATE Play according to SupLinRel of \cite{linRel} with time horizon of $25^i T_0$ with parameters $\delta_i \in (0,1)$, dimension $d_{\mathcal{M}_i}$ and feature scaling $b(\delta) \coloneqq  O \left( \tau \sqrt{\log \left( \frac{TK}{\delta} \right)} \right)$.
 \ENDFOR
 \FOR {Times $t \in \{T_i+1,\cdots,T_i + 5^i\sqrt{T_0}\}$}
 \STATE Play an arm from the action set ${\mathcal{A}}$ chosen uniformly and independently at random.
 \ENDFOR
 \STATE $\boldsymbol{\alpha}_i \in \real^{S_i \times d}$ with each row being  the arm played during all random explorations in the past.
 \STATE $\boldsymbol{y}_i \in \real^{S_i}$  with $i$-th entry being the observed reward at the $i$-th random exploration in the past
 \STATE $\widehat{\beta}_{i+1} \gets (\boldsymbol{\alpha}_i^T\boldsymbol{\alpha}_i)^{-1}\boldsymbol{\alpha}_i\mathbf{y}_i$, is a $d$ dimensional vector
 \ENDFOR
  \end{algorithmic}
  \label{algo:main_algo_dimensions_foster}
\end{algorithm}

\subsection{Regret Guarantee for Algorithm \ref{algo:main_algo_dimensions_foster}}

In order to specify a regret guarantee, we will need to specify the value of $T_0$. 
We do so as before. 
For any $N$, denote by $\lambda_{max}^{(N)}$ and $\lambda_{min}^{(N)}$ to be the maximum and minimum eigen values of the following matrix: $\boldsymbol{\Sigma}^N := \mathbb{E} \left[ \frac{1}{K} \sum_{j=1}^K \sum_{t=1}^N \phi^M(x_t,j)\phi^M(x_t,j)^T \right]$, where the expectation is with respect to $(x_t)_{t \in [T]}$ which is an i.i.d. sequence with distribution $\mathcal{D}$. First, given the distribution of $x \sim \mathcal{D}$, one can (in principle) compute $\lambda_{max}^{(N)}$ and $\lambda_{min}^{(N)}$ for any $N \geq 1$. Furthermore, from the assumption on $\mathcal{D}$, $\lambda_{min}^{(N)} = \widetilde{O} \left( \frac{1}{\sqrt{d}} \right) > 0$ for all $N \geq 1$.
Choose $T_0 \in \mathbb{N}$ to be the smallest integer such that
\begin{equation}
    \sqrt{T_0} \geq b(\delta)\max \left ( \frac{32\sigma^2}{(\lambda_{min}^{(\lceil \sqrt{T_0} \rceil)})^2}\ln (2d/\delta), \frac{4}{3} \frac{(6\lambda_{max}^{(\lceil \sqrt{T_0}\rceil)}+\lambda_{min}^{(\lceil \sqrt{T_0}\rceil)})(d+\lambda_{max}^{(\lceil \sqrt{T_0}\rceil)})}{(\lambda_{min}^{(\lceil \sqrt{T_0}\rceil)})^2}\ln ( 2d/\delta) \right ).
    \label{eqn:T_0_defn_foster}
\end{equation}
As before, it is easy to see that 
\begin{align*}
    T_0 = O \left( d^2 \ln^2 \left( \frac{1}{\delta}\right) \tau^2 \ln \left( \frac{TK}{\delta}\right) \right).
\end{align*}
Furthermore, following the same reasoning as in Lemmas \ref{lem:linear_reg_guarantee} and \ref{lem:reg_bounds}, one can verify that for all $i \geq 4$, $\mathbb{P} \left[ \| \widehat{\beta}_{i-1} - \beta^* \|_{\infty} \geq 2^{-i} \right] \leq \frac{\delta}{2^i} $.

\begin{theorem}
Suppose Algorithm \ref{algo:main_algo_dimensions_foster} is run with input parameters $\delta \in (0,1)$, and $T_0$ as given in Equation (\ref{eqn:T_0_defn_foster}), then with probability at-least $1-\delta$, the regret after a total of $T$ arm-pulls satisfies
\begin{align*}
R_T \leq 2T_0 \max \left( 25^4, \frac{2}{{\gamma^{4.65}}} \right) + 308 (1 + \ln(2K T \ln T))^{3/2} \sqrt{T d_{m^*}} + 100 \sqrt{T}.
\end{align*}
The parameter $\gamma > 0$ is the minimum  magnitude of the non-zero coordinate of $\beta^*$, i.e., $\gamma = \min \{|\beta^*_i| : \beta^*_i \neq 0 \}$.
\label{thm:adaptive_dimension_foster}
\end{theorem}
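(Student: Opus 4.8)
The plan is to follow the architecture of the proof of Theorem~\ref{thm:adaptive_dimension} essentially verbatim, replacing OFUL by \texttt{SupLinRel}~\cite{linRel} as the regret-minimization black box and carrying along the feature-scaling normalization $b(\delta)$. First I would define the good event $\mathcal{E}$ on which the support of $\beta^*$ is eventually recovered exactly. This is the intersection of two sub-events: the \emph{estimation event}, namely that $\|\widehat{\beta}_{i-1}-\beta^*\|_{\infty} \leq 2^{-i}$ for every phase $i \geq 4$ simultaneously, and the \emph{feature-boundedness event} $\{\sup_{t \leq T,\, a \in \mathcal{A}} \|\phi^M(x_t,a)\|_2 \leq b(\delta)\}$. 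The first holds with probability at least $1-\delta/2$ by union-bounding the per-phase guarantee $\mathbb{P}[\|\widehat{\beta}_{i-1}-\beta^*\|_{\infty} \geq 2^{-i}] \leq \delta/2^i$ stated just before the theorem, exactly as in Corollary~\ref{cor:good_coordinate_behaviour}; the second holds with probability at least $1-\delta/4$ by the sub-Gaussianity of $\phi^M(x,a)$ and the definition of $b(\delta)$.

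On $\mathcal{E}$ I would establish support recovery. Set $i(\gamma) := \max(4,\lceil \log_2(1/\gamma)\rceil)$. For every phase $i \geq i(\gamma)$ one has $2^{-i} < \gamma$, so on the estimation event every nonzero coordinate of $\beta^*$ (of magnitude at least $\gamma$) exceeds the threshold $\varepsilon_i/2 = 2^{-(i+1)}$ while every zero coordinate stays strictly below it. Hence $\mathcal{D}_i$ equals the true support, and by the nested structure $\max \mathcal{D}_i = d_{m^*}$, giving $\mathcal{M}_i = m^*$; that is, \texttt{SupLinRel} is instantiated in exactly the correct dimension $d_{m^*}$ for all $i \geq i(\gamma)$.

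Next I would decompose $R_T$, as in \eqref{eqn:regret_decomposition_dimension}, into three pieces: (A) the regret of all phases $0,\dots,i(\gamma)-1$; (B) the \texttt{SupLinRel} regret-minimization blocks of phases $i \geq i(\gamma)$; and (C) the random-exploration blocks of phases $i \geq i(\gamma)$. For (A) I bound the regret crudely by the total number of rounds in those phases, $\sum_{j<i(\gamma)}(25^j T_0 + 5^j\lceil\sqrt{T_0}\rceil) \leq 2 T_0 \sum_{j < i(\gamma)} 25^j \leq C\, T_0\, 25^{i(\gamma)}$, and then use $25 \leq 2^{4.65}$ to rewrite $25^{i(\gamma)}$ as $\max(25^4, \gamma^{-4.65})$, producing the first term up to constants. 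For (B) I plug the horizon $\widetilde{T} = 25^iT_0$ and dimension $d_{m^*}$ into the \texttt{SupLinRel} regret bound of~\cite{linRel}, which scales as $(1+\ln(2K \widetilde{T} \ln \widetilde{T}))^{3/2}\sqrt{\widetilde{T}\, d_{m^*}}$; replacing $\widetilde{T}$ by $T$ inside the logarithm and factoring out $\sqrt{25^iT_0}$, the geometric sum $\sum_i \sqrt{25^i T_0}$ is dominated by its largest term $\sqrt{T}$, which yields the $\sqrt{T d_{m^*}}$ term with its polylogarithmic factor. For (C) I use $5^j\lceil\sqrt{T_0}\rceil \leq \sqrt{25^jT_0}$, so the exploration rounds summed over phases form a geometric series again dominated by $\sqrt{T}$, giving the $100\sqrt{T}$ term. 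A final union bound over $\mathcal{E}$ and the per-phase \texttt{SupLinRel} slacks $\delta_i = \delta/2^i$ keeps the total failure probability below $\delta$.

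The main obstacle is the estimation event itself, the one step I am allowed to import from the statement preceding the theorem. Unlike the continuum setting of Theorem~\ref{thm:adaptive_dimension}, the exploration features are not isotropic draws on the sphere but $\phi^M(x,a)$ with $x\sim\mathcal{D}$ and $a$ uniform, which are merely sub-Gaussian with parameter $\tau^2$ and have covariance $\boldsymbol{\Sigma}^N$ with $\lambda_{\min} = \widetilde{O}(1/\sqrt{d}) > 0$. Establishing the $\ell_\infty$ concentration of $\widehat{\beta}_{i-1}$ therefore requires a matrix-Bernstein argument to control the empirical Gram matrix against $\boldsymbol{\Sigma}^N$, together with a sub-Gaussian tail bound on the noise-weighted feature sums; this is precisely what the choice of $T_0$ in \eqref{eqn:T_0_defn_foster} and the normalization by $b(\delta)$ are engineered to guarantee. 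Everything downstream is the same bookkeeping as in the continuum proof, so once this concentration is in hand the remainder of the argument is routine.
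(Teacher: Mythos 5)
Your proposal follows the paper's proof essentially verbatim: the same good event $\mathcal{E}$ (feature boundedness under $b(\delta)$ intersected with the per-phase $\ell_\infty$ estimation guarantee, which the paper likewise imports from the discussion preceding the theorem rather than re-deriving), the same support-recovery step via a threshold phase $i(\gamma)$, the same three-term regret decomposition, the same invocation of the \texttt{SupLinRel} bound (Theorem 6 of \cite{linRel}) with geometric-sum bookkeeping, and the same $25 \leq 2^{4.65}$ conversion at the end. The only discrepancy is in constants: the paper sets $i(\gamma) = \max\left(4, \log_2\left(\frac{2}{\gamma}\right)\right)$ rather than your $\max\left(4, \left\lceil \log_2\left(\frac{1}{\gamma}\right)\right\rceil\right)$, the extra factor of $2$ being needed so that nonzero coordinates clear the threshold $\varepsilon_i/2$ with a margin rather than marginally; this is a bookkeeping adjustment, not a different argument.
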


In order to parse the above theorem, the following corollary is presented.

\begin{corollary}
Suppose Algorithm \ref{algo:main_algo_dimensions_foster} is run with input parameters $\delta \in (0,1)$, and $T_0 = \widetilde{O} \left(d^2\ln^2 \left( \frac{1}{\delta} \right) \right)$ given in Equation (\ref{eqn:T_0_defn_foster}) , then with probability at-least $1-\delta$, the regret after $T$ times satisfies
\begin{align*}
        R_T &\leq O \left( \frac{d^2}{{\gamma^{4.65}}} \ln^2 ( d/\delta) \tau^2 \ln \left( \frac{TK}{\delta}\right) \right)  + \widetilde{O} (  \sqrt{ T d^*_m}).
\end{align*}
\label{cor:dimension_adaptation_foster}
\end{corollary}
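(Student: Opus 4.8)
The plan is to obtain Corollary~\ref{cor:dimension_adaptation_foster} directly from Theorem~\ref{thm:adaptive_dimension_foster} by substituting the explicit order of $T_0$ and absorbing logarithmic factors into the $\widetilde{O}$ notation. First I would record, from Equation~(\ref{eqn:T_0_defn_foster}) together with the stated bound $\lambda_{\min}^{(N)} = \widetilde{O}(1/\sqrt{d})$ and the feature scaling $b(\delta) = O(\tau\sqrt{\log(TK/\delta)})$, that the smallest admissible $T_0$ has order $T_0 = O\big(d^2 \ln^2(1/\delta)\,\tau^2 \ln(TK/\delta)\big)$. Plugging this into the leading term $2T_0\max(25^4, 2/\gamma^{4.65})$ of the theorem, and noting that for small $\gamma$ the maximum is realised by the $\gamma^{-4.65}$ branch, produces the constant-in-$T$ term $O\big(\tfrac{d^2}{\gamma^{4.65}}\ln^2(d/\delta)\,\tau^2\ln(TK/\delta)\big)$, where $\ln^2(1/\delta)$ is folded into $\ln^2(d/\delta)$. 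The remaining two terms, $308(1+\ln(2KT\ln T))^{3/2}\sqrt{T d_{m^*}}$ and $100\sqrt{T}$, both scale as $\sqrt{T}$ up to a $\sqrt{d_{m^*}}$ factor and polylogarithms, so they collapse into $\widetilde{O}(\sqrt{T d^*_m})$, yielding the claimed bound.

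The substantive content is therefore in Theorem~\ref{thm:adaptive_dimension_foster}, which I would prove by mirroring the continuum-armed argument of Theorem~\ref{thm:adaptive_dimension}. The first step is a support-recovery guarantee: using the $S_i$ random-exploration samples gathered up to phase $i$, the least-squares estimate $\widehat{\beta}_i$ satisfies $\|\widehat{\beta}_i - \beta^*\|_\infty \le 2^{-i}$ with probability at least $1-\delta/2^i$ for every $i \ge 4$, and hence simultaneously with probability at least $1-\delta/2$ by a union bound, exactly as in Lemma~\ref{lem:linear_reg_guarantee} and Corollary~\ref{cor:good_coordinate_behaviour}. On this good event the thresholding rule $\mathcal{D}_i = \{j : |\widehat{\beta}_i| \ge \varepsilon_i/2\}$ recovers the true support for all phases $i \ge i(\gamma) := \max(4, \log_2(1/\gamma))$, so that $\mathcal{M}_i$ equals the true model index $m^*$ and {\ttfamily SupLinRel} is instantiated in dimension $d_{m^*}$ rather than $d$.

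Given correct dimension selection, I would bound the regret of each regret-minimization block by the closed-form {\ttfamily SupLinRel} bound of \cite{linRel}, which over horizon $25^i T_0$ in dimension $d_{m^*}$ and with feature rescaling $b(\delta)$ gives an $O\big((1+\ln(KT\ln T))^{3/2}\sqrt{d_{m^*}\,25^i T_0}\big)$ bound; summing the geometric series $\sum_i \sqrt{25^i T_0}$ up to the terminal phase (where $25^{\text{top}}T_0 \approx T$) yields the $\sqrt{T d_{m^*}}$ scaling. The regret of all phases before $i(\gamma)$ is crudely bounded by their total length $2T_0\,25^{i(\gamma)} = 2T_0\max(25^4, \gamma^{-\log_2 25})$, and since $\log_2 25 \le 4.65$ this gives the $\gamma^{-4.65}$ dependence; the random-exploration slots contribute $\sum_i 5^i\lceil\sqrt{T_0}\rceil = O(\sqrt{T})$, matching the $100\sqrt{T}$ term.

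The main obstacle is the support-recovery step in the finite-arm setting, which is the one genuinely new ingredient relative to the continuum case. Whereas Theorem~\ref{thm:adaptive_dimension} samples features uniformly on the sphere, so the design covariance is the well-conditioned matrix $M_N$ of Lemma~\ref{lem:reg_bounds}, here the regression design is driven by $\phi^M(x,a)$ with $x\sim\mathcal{D}$ and a uniformly random action, so I must control the smallest eigenvalue $\lambda_{\min}(\Sigma_M)>0$ and carry the sub-gaussian scale $\tau^2$ and the normalisation $b(\delta)$ through the $\ell_\infty$ concentration of $\widehat{\beta}_i - \beta^*$ — which is precisely why the threshold in Equation~(\ref{eqn:T_0_defn_foster}) inflates $T_0$ by $\tau^2\ln(TK/\delta)$. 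A secondary point requiring care is verifying that rescaling the features by $b(\delta)$ inside {\ttfamily SupLinRel} is compatible with its stated regret guarantee, so that the $\sqrt{d_{m^*}}$ (and not $\sqrt{d}$) dependence survives once the correct dimension has been identified.
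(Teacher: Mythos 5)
Your proposal is correct and follows essentially the same route as the paper: the corollary is obtained exactly as you describe, by substituting the order of $T_0$ from Equation~(\ref{eqn:T_0_defn_foster}) into Theorem~\ref{thm:adaptive_dimension_foster} and absorbing logarithmic factors, and your outline of the theorem's proof (support recovery from random-exploration least squares, union bound, correct dimension $d_{m^*}$ for all phases $i \ge i(\gamma)$, the closed-form {\ttfamily SupLinRel} bound summed geometrically, a crude length bound on early phases, and $25 \le 2^{4.65}$ for the $\gamma^{-4.65}$ term) is precisely the paper's argument. The only differences are cosmetic constants (the paper takes $i(\gamma) = \max(4,\log_2(2/\gamma))$), and you correctly flag as a gap needing care the $\ell_\infty$ concentration under the design $\phi^M(x,a)$, $x \sim \mathcal{D}$ --- a step the paper itself only asserts ``follows by the same reasoning'' as the uniform-on-sphere case.
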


\begin{proof}[Proof of Theorem \ref{thm:adaptive_dimension_foster}]

The proof proceeds identical to that of Theorem \ref{thm:adaptive_dimension}. Observe from Lemmas \ref{lem:reg_bounds} and \ref{lem:linear_reg_guarantee}, that the choice of $T_0$ is such that for all phases $i \geq 1$, the estimate $\mathbb{P}\left[ \| \widehat{\beta}_{i-1} - \beta^* \|_{\infty} \geq 2^{-i} \right] \leq \frac{\delta}{2^i}$.
Thus, from an union bound, we can conclude that 
\begin{align*}
    \mathbb{P} \left[ \cup_{i \geq 4} \| \widehat{\beta}_{i-1} - \beta^* \|_{\infty} \geq 2^{-i}  \right] \leq \frac{\delta}{4}.
\end{align*}
Thus at this stage,  with probability at-least $1-\frac{\delta}{2}$, the following events holds.
\begin{itemize}
    \item $\sup_{t \in [0,T],a \in \mathcal{A}} \| \phi^M(x_t,a) \|_2 \leq b(\delta)$
    \item $ \| \widehat{\beta}_{i-1} - \beta^* \|_{\infty} \leq 2^{-i}$, for all $i \geq 4$.
\end{itemize}
Call these events as $\mathcal{E}$.
As before, let $\gamma > 0$ be the smallest value of the non-zero coordinate of $\beta^*$. Denote by the phase $i(\gamma) \coloneqq \max \left( 4, \log_{2} \left( \frac{2}{\gamma} \right)\right)$. Thus, under the event $\mathcal{E}$, for all phases $i \geq i(\gamma)$, the dimension $d_{\mathcal{M}_i} = d_m^* $, i.e., the SupLinRel is run with the correct set of dimensions.  

It thus remains to bound the error by summing over the phases, which is done identical to that in Theorem \ref{thm:adaptive_dimension}.
With probability, at-least $1-\frac{\delta}{2} - \sum_{i \geq 4}\delta_i \geq 1-\delta $,
\begin{align*}
R_T & \leq \sum_{j=0}^{i(\gamma)-1} \left( 25^jT_0 + 5^j \sqrt{T_0} \right) + \sum_{j=i(\gamma)}^{\bigg\lceil \log_{25} \left( \frac{T}{T_0} \right) \bigg\rceil} \text{Regret(SupLinRel)}(25^iT_0, \delta_i,d_{\mathcal{M}_i,b(\delta)}) \\
& + \sum_{j=i(\gamma)}^{\bigg\lceil \log_{25} \left( \frac{T}{T_0} \right) \bigg\rceil} 5^j\sqrt{T_0},
\end{align*}
where $\text{Regret(SupLinRel)}(25^iT_0, \delta_i,d_{\mathcal{M}_i,b(\delta)}) \leq  44 (1 + \ln(2K 25^iT_0 \ln 25^iT_0))^{3/2}\sqrt{25^iT_0 d_{\mathcal{M}_i}} + 2\sqrt{25^iT_0}$. This expression follows from Theorem $6$ in \cite{linRel}. We now use this to bound each of the three terms in the display above.
Notice from straightforward calculations that the first term is bounded by $2T_025^{i(\gamma)}$ and the last term is bounded above by $25 \lceil \sqrt{T_0} \rceil 5^{\log_{25} \left( \frac{T}{T_0} \right)}$
respectively. We now bound the middle term as
\begin{multline*}
 \sum_{j=i(\gamma)}^{\bigg\lceil \log_{25} \left( \frac{T}{T_0} \right) \bigg\rceil} \text{Reg(SupLinRel)}(25^jT_0, \delta_i,d_{m}^*,b(\delta))  \\ \leq b(\delta) \left( \sum_{j=i(\gamma)}^{\bigg\lceil \log_{25} \left( \frac{T}{T_0} \right) \bigg\rceil} 44 (1 + \ln(2K 25^iT_0 \ln 25^iT_0))^{3/2}\sqrt{25^iT_0 d_{\mathcal{M}_i}} + 2\sqrt{25^iT_0} \right).
 \end{multline*}
 The first summation can be bounded as 
 \begin{align*}
   & \sum_{j=i(\gamma)}^{\bigg\lceil \log_{25} \left( \frac{T}{T_0} \right) \bigg\rceil} 44 (1 + \ln(2K 25^iT_0 \ln 25^iT_0))^{3/2}\sqrt{25^iT_0 d_{\mathcal{M}_i}} \\
   &\leq \sum_{j=i(\gamma)}^{\bigg\lceil \log_{25} \left( \frac{T}{T_0} \right) \bigg\rceil} 44 (1 + \ln(2K T \ln T))^{3/2}\sqrt{25^iT_0 d_{m}^*},\\
   &\leq 44 (1 + \ln(2K T \ln T))^{3/2} 75^{\log_{25} \left( \frac{T}{T_0} \right)} \sqrt{T_0 d_m^*},\\
   &= 308 (1 + \ln(2K T \ln T))^{3/2} \sqrt{T d_m^*},
 \end{align*}
 and the second by 
 \begin{align*}
    \sum_{j=i(\gamma)}^{\bigg\lceil \log_{25} \left( \frac{T}{T_0} \right) \bigg\rceil} 2\sqrt{25^iT_0} \leq 50 \sqrt{T}.
 \end{align*}
 
 Thus, with probability at-least $1-\delta$, the regret of Algorithm \ref{algo:main_algo_dimensions_foster} satisfies
 \begin{align*}
R_T \leq 2T_0 25^{i(\gamma)} + 308 (1 + \ln(2K T \ln T))^{3/2} \sqrt{T d_m^*} + 100 \sqrt{T}, 
 \end{align*}
 where $i(\gamma) \coloneqq \max \left( 4, \log_{2} \left( \frac{2}{\gamma} \right) \right)$. Thus,

     \begin{align*}
R_T \leq 2T_0 \max \left( 25^4, \frac{2}{{\gamma^{4.65}}} \right) + 308 (1 + \ln(2K T \ln T))^{3/2} \sqrt{T d_m^*} + 100 \sqrt{T},
 \end{align*} 
as $25 \leq 2^{4.65}$

\end{proof}

\end{document}